\documentclass[letterpaper]{article} % DO NOT CHANGE THIS
\usepackage[preprint]{aaai2027}  % DO NOT CHANGE THIS
\usepackage[hyphens]{url}  % DO NOT CHANGE THIS
\usepackage{graphicx} % DO NOT CHANGE THIS
\usepackage{natbib}  % DO NOT CHANGE THIS AND DO NOT ADD ANY OPTIONS TO IT
\usepackage{caption} % DO NOT CHANGE THIS AND DO NOT ADD ANY OPTIONS TO IT
\usepackage{algorithm}
\usepackage{algorithmic}
\usepackage{newfloat}
\usepackage{listings}
\DeclareCaptionStyle{ruled}{labelfont=normalfont,labelsep=colon,strut=off} % DO NOT CHANGE THIS
\floatstyle{ruled}
\newfloat{listing}{tb}{lst}{}
\floatname{listing}{Listing}
\usepackage{amsmath,amssymb,amsfonts,amsthm}
\usepackage{booktabs}
\usepackage{bm}
\newtheorem{proposition}{Proposition}
\newtheorem{theorem}{Theorem}
\newtheorem{lemma}{Lemma}
\newtheorem{corollary}{Corollary}

\newcommand{\ba}{\bm a}

\newcommand{\A}{\mathcal A}
\newcommand{\F}{\mathcal F}
\newcommand{\E}{\mathbb E}
\newcommand{\R}{\mathbb R}
\newcommand{\HH}{\mathbf H}
\newcommand{\one}{\mathbf 1}

\title{Characterizing Oblivious Common-Reward Multi-Player Bandits: Universal Stationarity, Coordination Traps, and Certified Global Rates}
\author{Mingxiao Li, Wenhai Lai, and Hongyu Wang}
\affiliations{}

\begin{document}

\maketitle

\begin{abstract}
This paper presents a theoretical analysis of the \emph{oblivious common-reward multi-player multi-armed bandit} (OC-MPMAB), a minimal multi-player bandit model in which multiple players select arms simultaneously and observe only a shared reward. Crucially, the players are fully oblivious: they do not observe the actions of other players and do not even know whether any other players exist. The contributions of this work are threefold. First, we prove that the independent gradient-bandit updates applied by the players jointly form an unbiased stochastic gradient estimator of a hidden product-policy objective. This property then yields a universal finite-time first-order stationarity guarantee for every instance. Second, we show that stationarity cannot, in general, be strengthened to global optimality. To illustrate this, we identify a product-policy trap obstruction that rules out the guarantee of global optimality. Third, we show that global optimality can be recovered under structural assumptions on the reward table. To make global optimality achievable, we introduce finite-table certificates that separately characterize product-gradient dominance and self-bounding stochastic noise. We then establish certificate-conditioned bounds on the product-policy gap and pseudo-regret. Consequently, the standard $O(1/t)$ product-policy gap and $O(\log T)$ pseudo-regret rates hold when the certificates are uniform.
\end{abstract}

\section{Introduction}

Multi-armed bandits provide a fundamental framework for sequential decision making under partial feedback. Classical algorithms, such as UCB and Thompson sampling, achieve strong regret guarantees for single-player stochastic bandits \cite{thompson1933likelihood,lai1985asymptotically,auer2002finite,agrawal2012analysis,lattimore2020bandit}. Recently, stochastic gradient methods provide an alternative approach. For example, likelihood-ratio estimators, REINFORCE, and gradient bandit algorithms update softmax policies using sampled actions and rewards \cite{robbins1951stochastic,glynn1990likelihood,williams1992simple,sutton2018reinforcement}. For single-player bandits, these methods can achieve global optimality under self-bounding noise and weak exploration conditions \cite{mei2023stochastic}.

These single-player guarantees rely on the assumption that each arm has a fixed reward distribution. This assumption may fail in modular AI systems, where separate controllers can adapt prompts, retrieval, tool usage, and verification based on the same end-to-end score. Due to black-box interfaces, access restrictions, or separate ownership, the decisions and even the existence of other adaptive components may remain hidden. Consequently, a controller may appear to face a standard bandit problem, while the value of each local arm changes as the hidden controllers adapt. This hidden coupling motivates a minimal model that characterizes what can be achieved from shared rewards alone. We formalize this setting as the \emph{oblivious common-reward multi-player multi-armed bandit} (OC-MPMAB). In an OC-MPMAB, each player selects a local arm, the environment evaluates the joint action, and all players receive the same reward. Crucially, each player observes only its selected arm and the shared reward. In the setting studied here, each player applies the standard gradient-bandit update independently. From a centralized perspective, however, the independent softmax policies induce a product distribution over joint actions. As a main contribution of this work, we show that the local updates jointly constitute an unbiased stochastic gradient estimator of the expected-reward objective induced by this hidden product policy.

Moreover, the model considered in this work differs fundamentally from both equilibrium selection in common-interest games and payoff-based learning under a specified game model. Existing analyses typically assume a known game form, observable joint actions, or prescribed revision dynamics. In contrast, players in OC-MPMAB have access to none of this information. Therefore, the theoretical convergence results of these models cannot be directly applied to OC-MPMAB. In this paper, we revisit and re-establish the theoretical convergence guarantees of OC-MPMAB. Specifically, we show that finite-time stationarity is guaranteed for every finite normalized reward table, whereas global optimality cannot be guaranteed in general because the product-policy landscape may contain attracting suboptimal traps. Nevertheless, global bounds on the product-policy gap and pseudo-regret can be recovered when offline certificates for the reward table rule out such traps and control stochastic sampling noise. Accordingly, the positive and negative results characterize fundamental properties of the hidden product-policy landscape.

In summary, this paper provides a comprehensive theoretical analysis of the OC-MPMAB model. The contributions of this work are as follows:
\begin{itemize}
\item We prove that the independent local gradient-bandit updates in OC-MPMAB jointly constitute an unbiased stochastic gradient estimator of the centralized expected-reward objective over product policies, even though no player observes either the joint action or this objective. This identity then yields a universal finite-time stationarity guarantee.

\item We show that finite-time stationarity does not imply global optimality for OC-MPMAB: strict trap minors create suboptimal attractors. Separately, on a deterministic block trap, the stochastic update converges from the default initialization to a suboptimal block with constant probability under sufficiently small diminishing stepsizes.

\item We formulate guarantees of global optimality as a finite reward-table certification problem. Specifically, we introduce separate certificates that rule out product-gradient traps and control stochastic sampling noise. These certificates yield bounds on the product-policy gap and pseudo-regret for both sparse coordination reward tables and locally dominant dense reward tables.
\end{itemize}
\textbf{Notation.} The bold lowercase symbol $\ba=(A_1,\ldots,A_N)$ denotes a joint action, where $A_i\in\A_i$ is the local arm of player $i$; $\ba_{-i}$ denotes the profile of all players except $i$, and $\A_{-i}=\prod_{j\ne i}\A_j$. A policy tuple $\pi=(\pi_1,\ldots,\pi_N)$ induces $\pi(\ba)=\prod_i\pi_i(A_i)$, while $\pi_{\HH}$ denotes the product policy parameterized by the preference collection $\HH$. For vectors, $\|\cdot\|_2$ is the Euclidean norm; for matrices, $\|\cdot\|_2$ and $\|\cdot\|_F$ are the spectral and Frobenius norms, respectively.

\section{OC-MPMAB Model}

\subsection{System Model}

An \emph{oblivious common-reward multi-player multi-armed bandit} (OC-MPMAB) consists of a finite player set $ [N]=\{1,2,\ldots,N\} $, where each player $i\in[N]$ is associated with a finite arm set $\A_i$ of cardinality $K_i$. At each round $t$, all players simultaneously select local arms. Denote by $A_{i,t}$ the local arm selected by player $i$. These local arms then form the joint action
\begin{equation}
\ba_t=(A_{1,t},\ldots,A_{N,t})\in\A,
\;
\A=\A_1\times\cdots\times\A_N .
\end{equation}
For every joint action $\ba\in\A$, the environment specifies a stochastic reward distribution. Let $\F_t$ denote the centralized history before arm selection at round $t$. After the joint action $\ba_t$ is selected, the environment generates a scalar reward $R_t$ according to the reward distribution associated with $\ba_t$. Following the standard convention for stochastic bandits, we assume that rewards are normalized:
\begin{equation}
R_t\in[0,1]\quad\text{for all }t.
\end{equation}
We assume this normalized feedback model throughout the paper.
Moreover, we define the mean reward of the joint action $\ba_t$ as
\begin{equation}
r(\ba_t)=\E[R_t\mid \F_t,\ba_t]\in[0,1].
\end{equation}
Therefore, the conditional mean reward depends only on the current joint action through a fixed finite reward table and is independent of the previous history.

In OC-MPMABs, \emph{oblivious} means that each player is unaware of both the existence and the actions of the other players. Specifically, player $i$ observes only the sampled local arm and the reward received in each round. Consequently, each player behaves as a standard single-agent player and executes a conventional gradient bandit algorithm. More importantly, the reward is common: all players observe the same scalar reward $R_t$ at round $t$, and there are no player-specific rewards. We remark that the joint action space $\A$, the common reward function $r$, and the product-policy structure introduced below are used solely for centralized analysis. None of this information is available to any individual player. The next subsection introduces the local gradient bandit update for each player.

% This model differs from classical multi-player bandits with collisions or player-specific rewards, as well as from cooperative bandit optimization, in which multiple private reward functions are aggregated. 

\subsection{Gradient Bandit Algorithm for Each Player}

Because a player has no information about other players, the player applies the standard gradient bandit update using only the sampled local arm and the common reward. Specifically, player $i$ maintains a numerical preference variable $H_{i,t}(x)\in\R$ for every local arm $x\in\A_i$. A larger value of $H_{i,t}(x)$ assigns a higher sampling probability to arm $x$ in round $t$. At the beginning of each round, player $i$ converts these preferences into the softmax distribution
\begin{equation}
\label{eq:softmax}
  \pi_{i,t}(x)=\frac{\exp(H_{i,t}(x))}{\sum_{y\in\A_i}\exp(H_{i,t}(y))},
  \; x\in\A_i .
\end{equation}
The player then samples a local arm according to
\begin{equation}
  A_{i,t}=x\;\text{with probability } \pi_{i,t}(x).
\end{equation}
After all players have chosen their arms, the environment observes the joint action $\ba_t=(A_{1,t},A_{2,t},\ldots,A_{N,t})$ and broadcasts the same reward sample $R_t$ to every player.

It remains to specify the update of each $H_{i,t}(x)$. By convention, every $H_{i,1}(x)$ is initialized to zero. Each player then updates the local preference variables as
\begin{equation}
\label{eq:update}
  H_{i,t+1}(x)=H_{i,t}(x)+\gamma_t g_{i,t}(x),
  \; x\in\A_i,
\end{equation}
where $\gamma_t\ge0$ is a deterministic stepsize and
\begin{align}
\label{eq:define_g_it}
  g_{i,t}(x)&=R_t
  \bigl(\one\{A_{i,t}=x\}-\pi_{i,t}(x)\bigr)\notag\\
  &=
  \begin{cases}
  R_t(1-\pi_{i,t}(x)), & \text{if } A_{i,t}=x,\\
  -R_t\pi_{i,t}(x), & \text{if } x\ne A_{i,t}.
  \end{cases}
\end{align}
Note that the update does not use the number of players, the actions of other players, joint-action samples, messages, or the reward table.

We now examine the above process from a centralized perspective. Let $d=\sum_{i=1}^N K_i$ and let $\HH=(H_i(x):i\in[N],x\in\A_i)\in\R^d$ denote the vector of all preference variables. We suppress the round index and write each $H_i(x)$ as a generic preference variable; the value generated by the update in round $t$ is denoted by $H_{i,t}(x)$. Moreover, we write $\mathbf g_t=(g_{i,t}(x):i\in[N],x\in\A_i)$ for the vector obtained by concatenating all local update coordinates at round $t$. Given $\HH$, the players independently sample their local arms according to the softmax probabilities in \eqref{eq:softmax}. Therefore, the probability assigned to a joint action $\ba=(A_1,A_2,\ldots,A_N)$ is
\begin{equation}
  \label{eq:product_policy}
  \pi_{\HH}(\ba)=\prod_{i=1}^N\pi_{i}(A_{i}).
\end{equation}
Treating $\HH$ as the optimization variable, we define the expected reward as
\begin{equation}
\label{eq:define_J}
  J(\HH)=\E_{\ba\sim\pi_{\HH}}[r(\ba)]
  =\sum_{\ba\in\A}\pi_{\HH}(\ba)r(\ba).
\end{equation}
Although the players update independently, their coupled updates estimate the gradient of $J$. Because finite softmax policies can approach every pure profile, the relevant optimum is
\begin{equation}
    \sup_{\HH\in\R^d}J(\HH)=J^\star,
    \label{eq:actual_problem}
\end{equation}
where $J^\star=\max_{\ba\in\A}r(\ba)$. No player observes this objective or the product policy in \eqref{eq:product_policy}; both are used only for centralized analysis.

\section{Unbiased Product-Policy Gradient}

Unlike in a single-player bandit, the observed reward depends on the full joint action rather than on one local arm. We first show that, despite this coupling, each local update is an unbiased estimator of the corresponding gradient coordinate of $J(\HH)$. The concatenated updates therefore implement stochastic gradient ascent on the objective in \eqref{eq:actual_problem}.

\begin{proposition}
\label{prop:unbiased}
For every round $t\geq 1$, player $i\in[N]$, and arm $x\in\A_i$, the gradient bandit update $g_{i,t}(x)$ in \eqref{eq:define_g_it} is an unbiased estimator of the partial derivative of the expected reward with respect to the preference variable for arm $x$:
\begin{equation}
\E\left[g_{i,t}(x)\mid\F_t\right]
=\left. \frac{\partial J(\HH)}{\partial H_i(x)} \right|_{\HH=\HH_t},
\label{eq:unbiased_coordinate_gradient}
\end{equation}
where the conditional expectation is taken over the sampled arms and the reward in round $t$.
\end{proposition}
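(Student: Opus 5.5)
We compute both sides of \eqref{eq:unbiased_coordinate_gradient} explicitly and match them. The idea is that the softmax score identity holds coordinatewise for each player's local policy. The product structure \eqref{eq:product_policy} then lets the other players' arms be averaged out into a marginal reward.

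\textbf{Step 1: the right-hand side.} Fix $\HH=\HH_t$ and define the marginal reward of player $i$,
\begin{equation*}
q_i(y)=\sum_{\ba_{-i}\in\A_{-i}}\Bigl(\prod_{j\ne i}\pi_j(A_j)\Bigr) r(y,\ba_{-i}),\qquad y\in\A_i .
\end{equation*}
Since $\pi_j$ for $j\ne i$ does not depend on $H_i(x)$, \eqref{eq:define_J} gives $J(\HH)=\sum_{y\in\A_i}\pi_i(y)q_i(y)$, where $q_i$ is constant in $H_i(\cdot)$. The standard softmax derivative is $\partial\pi_i(y)/\partial H_i(x)=\pi_i(y)(\one\{y=x\}-\pi_i(x))$. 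Therefore
\begin{equation*}
\frac{\partial J}{\partial H_i(x)}=\pi_i(x)\bigl(q_i(x)-J(\HH)\bigr).
\end{equation*}

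\textbf{Step 2: the left-hand side.} Use the tower property, conditioning on $\F_t$ and $\ba_t$. Given $\F_t$, the quantities $\pi_{i,t}(x)$ are $\F_t$-measurable because $\HH_t$ is. The update factor $\one\{A_{i,t}=x\}-\pi_{i,t}(x)$ is a function of $\ba_t$ alone. By the model assumption $\E[R_t\mid\F_t,\ba_t]=r(\ba_t)$, we obtain
\begin{equation*}
\E[g_{i,t}(x)\mid\F_t]=\E\bigl[r(\ba_t)(\one\{A_{i,t}=x\}-\pi_{i,t}(x))\mid\F_t\bigr].
\end{equation*}
Given $\F_t$, the players sample independently, so $\ba_t\sim\pi_{\HH_t}$. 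Expanding the expectation as a sum over $\ba$ gives the two terms $\pi_i(x)q_i(x)$ and $\pi_i(x)J(\HH_t)$, which matches Step 1.

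\textbf{Main obstacle.} The algebra is routine. The point that needs care is the conditioning in Step 2. We must verify that, conditional on $\F_t$, the joint action is exactly distributed as the product policy $\pi_{\HH_t}$. This holds because the players' sampling randomness is fresh and mutually independent. We must also check that the reward noise enters only through its conditional mean $r(\ba_t)$. Once both facts are stated explicitly, the identity follows by comparing the two closed forms.
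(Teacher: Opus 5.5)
Your proof is correct and follows essentially the paper's argument. Both differentiate only the local softmax factor $\pi_i(A_i)$ and match the resulting sum $\sum_{\ba}\pi_{\HH_t}(\ba)r(\ba)\bigl(\one\{A_i=x\}-\pi_i(x)\bigr)$ with the conditional expectation, obtained from $\E[R_t\mid\F_t,\ba_t]=r(\ba_t)$ and independent sampling. Your marginalization through $q_i$ adds one step: it writes both sides in the closed form $\pi_i(x)\bigl(Q_i^\pi(x)-J(\pi)\bigr)$, which the paper later uses as $\mathcal G_{i,x}(\pi)$.
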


\begin{proof}[Proof sketch]
Here and throughout, $\nabla J(\HH)=\left(\partial J(\HH)/\partial H_i(x):i\in[N],x\in\A_i\right)$ denotes the gradient vector of $J$ with respect to the concatenated preference vector $\HH$. Notably,
within the product policy, the only term that depends on $H_i(x)$ is $\pi_i(A_i)$, whose softmax derivative is given by
\begin{equation}
\frac{\partial \pi_i(A_i)}{\partial H_i(x)}
=
\pi_i(A_i)\bigl(\one\{A_i=x\}-\pi_i(x)\bigr).
\end{equation}
Substituting this identity into $\partial J(\HH)/\partial H_i(x)$ and using the identity $\E[R_t\mid\F_t,\ba_t]=r(\ba_t)$ yields the same summation as the conditional expectation of $g_{i,t}(x)$. The complete derivation is provided in the supplementary material.
\end{proof}

\textbf{Effect of baselines.} We remark that adding an action-independent baseline in the update \eqref{eq:define_g_it} will not change the claim in Proposition \ref{prop:unbiased}. Specifically, if $b_t$ is $\F_t$-measurable, its score contribution has zero conditional mean because $\E[\one\{A_{i,t}=x\}-\pi_{i,t}(x)\mid\F_t]=0$; therefore, replacing $R_t$ by $R_t-b_t$ in \eqref{eq:define_g_it} preserves Proposition \ref{prop:unbiased}.

\section{Finite-Time Stationarity}

Proposition~\ref{prop:unbiased} identifies the joint update as stochastic gradient ascent on \eqref{eq:actual_problem}. Without additional reward-table structure, a natural unconditional guarantee is finite-time first-order stationarity. We first establish the smoothness of $J(\HH)$.

\begin{lemma}
\label{lem:smoothness}
The objective $J(\HH)$ is globally $L$-smooth. Specifically, if
\begin{equation}
d=\sum_{i=1}^{N}K_i,
\end{equation}
then a valid smoothness constant is $L=2d$.
Equivalently, for all preference vectors $\HH$ and $\HH'$, we have
\begin{equation}
    J(\HH')
\geq
J(\HH)+\langle \nabla J(\HH),\HH'-\HH\rangle
-\frac{L}{2}\|\HH'-\HH\|_2^2 .
\end{equation}
\end{lemma}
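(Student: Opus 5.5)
The plan is to bound the Hessian of $J$ in spectral norm uniformly by $2d$. The inequality in the statement then follows from Taylor's theorem with integral remainder along the segment $\HH+s(\HH'-\HH)$, $s\in[0,1]$. $J$ is a finite sum of products of softmax coordinates, hence $C^\infty$. It therefore suffices to show $\|\nabla^2 J(\HH)\|_2\le 2d$ for every $\HH$. We will in fact bound the Frobenius norm, which dominates the spectral norm.

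\textbf{Second derivatives.} Write $\pi_{\HH}(\ba)$ as in \eqref{eq:product_policy} and set $s_i(\ba,x)=\one\{A_i=x\}-\pi_i(x)$. The derivative identity from the proof of Proposition~\ref{prop:unbiased} gives $\partial\pi_{\HH}(\ba)/\partial H_i(x)=\pi_{\HH}(\ba)s_i(\ba,x)$. Differentiating once more, for coordinates $(i,x)$ and $(j,y)$ we get
\begin{equation*}
\frac{\partial^2 J}{\partial H_i(x)\partial H_j(y)}=\sum_{\ba}r(\ba)\pi_{\HH}(\ba)\Bigl(s_i(\ba,x)s_j(\ba,y)-\one\{i=j\}\pi_i(x)\bigl(\one\{x=y\}-\pi_i(y)\bigr)\Bigr).
\end{equation*}
Here we used $\partial\pi_i(x)/\partial H_j(y)=\one\{i=j\}\pi_i(x)(\one\{x=y\}-\pi_i(y))$. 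Since $r\in[0,1]$, $\pi_{\HH}$ is a probability distribution, $|s_i|\le1$, and $|\pi_i(x)(\one\{x=y\}-\pi_i(y))|\le1$, each entry is bounded in absolute value by $2$.

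\textbf{Norm bound.} The Hessian is $d\times d$ with entries of magnitude at most $2$. Hence $\|\nabla^2J\|_2\le\|\nabla^2J\|_F\le\sqrt{d^2\cdot 4}=2d$, which gives $L=2d$. The descent-type inequality then comes from
\begin{equation*}
J(\HH')-J(\HH)-\langle\nabla J(\HH),\HH'-\HH\rangle=\int_0^1(1-s)(\HH'-\HH)^\top\nabla^2J(\HH+s(\HH'-\HH))(\HH'-\HH)\,ds\ge-\tfrac{L}{2}\|\HH'-\HH\|_2^2 .
\end{equation*}

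\textbf{Main obstacle.} The only delicate step is the second-derivative computation. One must correctly handle both the product of two score terms and the derivative of the $-\pi_i(x)$ term inside $s_i$, which appears only when $i=j$. A tighter constant could be obtained, for example by using $\sum_x|s_i(\ba,x)|\le 2$ row-wise, but the crude entrywise bound already delivers exactly $L=2d$, so that is the route we take.
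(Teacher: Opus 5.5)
Your proposal is correct and matches the paper's proof: the same Hessian formula (score-product term plus the $i=j$ correction from differentiating $-\pi_i(x)$), the entrywise bound of $2$, and $\|\nabla^2 J\|_2\le\|\nabla^2 J\|_F\le 2d$. The only addition is that you write out the integral-remainder Taylor step, which the paper leaves implicit when it says the smoothness inequality follows.
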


\begin{proof}
For coordinates $u=(i,x)$ and $v=(j,y)$, let $S_u(\ba)=\one\{A_i=x\}-\pi_i(x)$. Proposition~\ref{prop:unbiased} gives $\partial J/\partial H_i(x)=\E[r(\ba)S_u(\ba)]$. The Hessian entry indexed by $(v,u)$ is $\E[rS_uS_v]+\E[r\,\partial S_u/\partial H_j(y)]$, where $\partial S_u/\partial H_j(y)=-\one\{i=j\}\pi_i(x)(\one\{x=y\}-\pi_i(y))$. Both terms have absolute value at most $1$, so every Hessian entry is bounded by $2$. Since the Hessian is $d\times d$, $\|\nabla^2J(\HH)\|_2\le\|\nabla^2J(\HH)\|_F\le2d$, and the stated smoothness inequality follows.
\end{proof}

Combining Proposition~\ref{prop:unbiased} with Lemma~\ref{lem:smoothness} yields the following finite-time stationarity guarantee for OC-MPMABs.

\begin{theorem}[Finite-time stationarity]
\label{thm:stationarity}
Let
\begin{equation}
G^2=2N,
\,
\Gamma_T=\sum_{t=1}^{T}\gamma_t .
\end{equation}
Assume that $\gamma_t\geq 0$ for all $t$ and $\Gamma_T>0$. Then
\begin{equation}
    \frac{
\sum_{t=1}^{T}
\gamma_t
\E\!\left[\|\nabla J(\HH_t)\|_2^2\right]
}{
\Gamma_T
}
\leq
\frac{J^\star-J(\HH_1)}{\Gamma_T}
+
\frac{LG^2}{2}
\frac{\sum_{t=1}^{T}\gamma_t^2}{\Gamma_T}.
\end{equation}
Consequently,
\begin{equation}
    \min_{1\leq t\leq T}
\E\!\left[\|\nabla J(\HH_t)\|_2^2\right]
\leq
\frac{J^\star-J(\HH_1)}{\Gamma_T}
+
\frac{LG^2}{2}
\frac{\sum_{t=1}^{T}\gamma_t^2}{\Gamma_T}.
\end{equation}
In particular, if $\gamma_t=\eta>0$ for all $t$, then
\begin{equation}
    \frac{1}{T}
\sum_{t=1}^{T}
\E\!\left[\|\nabla J(\HH_t)\|_2^2\right]
\leq
\frac{J^\star-J(\HH_1)}{\eta T}
+
\frac{L\eta G^2}{2}.
\end{equation}
Further, let $\Delta_1=J^\star-J(\HH_1)$. If $\Delta_1>0$, choosing
$\eta=
\sqrt{
\frac{2\Delta_1}{LG^2T}
}$
yields
\begin{equation}
    \frac{1}{T}
\sum_{t=1}^{T}
\E\!\left[\|\nabla J(\HH_t)\|_2^2\right]
\leq
\sqrt{
\frac{2LG^2\Delta_1}{T}
}.
\end{equation}
If $\Delta_1=0$, the left-hand side is zero. Since $G^2=2N$ and $0\leq \Delta_1\leq 1$, the upper bound is at most
\begin{equation}
2\sqrt{\frac{LN}{T}}
=
2\sqrt{\frac{2dN}{T}} .
\end{equation}
\end{theorem}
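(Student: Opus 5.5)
The argument is the standard one-step ascent analysis of smooth stochastic gradient methods, applied to the centralized iterate $\HH_t$. First I will bound the second moment of the concatenated update $\mathbf g_t$ almost surely. For player $i$,
\[
\sum_{x\in\A_i} g_{i,t}(x)^2 = R_t^2\Bigl[(1-\pi_{i,t}(A_{i,t}))^2+\sum_{x\ne A_{i,t}}\pi_{i,t}(x)^2\Bigr].
\]
Writing $p=\pi_{i,t}(A_{i,t})$, the bracket is at most $(1-p)^2+(1-p)^2\le 2$. Since $R_t\in[0,1]$, this gives $\|\mathbf g_t\|_2^2\le 2N=G^2$ deterministically, so no variance decomposition is needed.

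Next I apply Lemma~\ref{lem:smoothness} with $\HH'=\HH_{t+1}=\HH_t+\gamma_t\mathbf g_t$. This gives
\[
J(\HH_{t+1})\ge J(\HH_t)+\gamma_t\langle\nabla J(\HH_t),\mathbf g_t\rangle-\tfrac{L}{2}\gamma_t^2\|\mathbf g_t\|_2^2 .
\]
I then take the conditional expectation given $\F_t$. Since $\HH_t$ is $\F_t$-measurable and $\gamma_t$ is deterministic, Proposition~\ref{prop:unbiased} turns the inner product into $\gamma_t\|\nabla J(\HH_t)\|_2^2$. Together with the second-moment bound, this yields
\[
\E[J(\HH_{t+1})\mid\F_t]\ge J(\HH_t)+\gamma_t\|\nabla J(\HH_t)\|_2^2-\tfrac{L G^2}{2}\gamma_t^2 .
\]
Taking total expectations and summing over $t=1,\dots,T$ telescopes to
\[
\sum_t\gamma_t\E\|\nabla J(\HH_t)\|_2^2\le \E[J(\HH_{T+1})]-J(\HH_1)+\tfrac{LG^2}{2}\sum_t\gamma_t^2 .
\]
I bound $J(\HH_{T+1})\le J^\star$, because $J$ is a convex combination of table entries. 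Dividing by $\Gamma_T>0$ gives the first inequality. The min bound follows because a weighted average with weights $\gamma_t/\Gamma_T$ dominates the minimum.

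For the constant stepsize, $\Gamma_T=\eta T$ and $\sum_t\gamma_t^2=\eta^2T$, which gives the stated form. Plugging in the given $\eta$ balances the two terms; each equals $\sqrt{LG^2\Delta_1/(2T)}$, so the sum is $\sqrt{2LG^2\Delta_1/T}$. With $G^2=2N$ and $\Delta_1\le1$, this is at most $\sqrt{4LN/T}=2\sqrt{LN/T}$, and $L=2d$ gives $2\sqrt{2dN/T}$.

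Two points need separate care.
\begin{itemize}
\item Case $\Delta_1=0$. Then $J(\HH_1)=J^\star$. With the zero initialization, $\pi_{\HH_1}$ is uniform, so every table entry equals $J^\star$ and $r$ is constant. In that case each partial derivative $\E[r S_u]$ is a constant times $\E[S_u]=0$, so $\nabla J\equiv 0$ everywhere and the left-hand side vanishes.
\item Sign of the smoothness inequality. Lemma~\ref{lem:smoothness} is stated as a lower bound. This is the direction needed for ascent, so it can be used directly with no sign flip.
\end{itemize}

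The only step with real content is the almost-sure bound $\|\mathbf g_t\|^2\le 2N$. Everything else is the textbook SGD telescoping argument, and the rest is arithmetic.
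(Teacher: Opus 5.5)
Your proposal is correct and follows the paper's proof essentially step for step. The shared steps are the almost-sure bound $\|\mathbf g_t\|_2^2\le 2N$, the lower-bound form of Lemma~\ref{lem:smoothness} applied to $\HH_{t+1}=\HH_t+\gamma_t\mathbf g_t$, conditional unbiasedness from Proposition~\ref{prop:unbiased}, telescoping with $J\le J^\star$, and the same balancing arithmetic. Your handling of $\Delta_1=0$ via the uniform initialization is a special case of the paper's full-support argument and works equally well.
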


\begin{proof}[Proof sketch]
Conditioning on the history $\F_t$, Proposition~\ref{prop:unbiased} implies that $\E[\mathbf g_t\mid\F_t]=\nabla J(\HH_t)$. The centered score vector for each player has squared norm at most $2$. Consequently, $\E[\|\mathbf g_t\|_2^2\mid\F_t]\le G^2=2N$. Applying Lemma~\ref{lem:smoothness} to the update $\HH_{t+1}=\HH_t+\gamma_t\mathbf g_t$ yields
\begin{equation}
    \E[J(\HH_{t+1})-J(\HH_t)\mid\F_t]\ge
\gamma_t\|\nabla J(\HH_t)\|_2^2-\frac{L\gamma_t^2G^2}{2}.
\end{equation}
Summing this inequality over $t$, using $J(\HH)\le J^\star$, and dividing by $\Gamma_T$ yields the stationarity bound. The constant-step-size rate follows by optimizing over $\eta$. The complete proof is provided in the supplementary material.
\end{proof}

Theorem~\ref{thm:stationarity} holds without additional structural assumptions: it requires only finite action spaces, the normalization $R_t\in[0,1]$, and the unbiased product-policy gradient identity. In particular, it does not require $J(\HH)$ to be concave, the optimal joint action to be unique, or the players to satisfy any coordination structure. The result provides finite-time control of the average and minimum expected gradient norm; it does not assert convergence of the iterates or global optimality. The next section explains why this distinction is unavoidable in general.

\section{Verifiable Structural Global Convergence}

Beyond unconditional stationarity, the results in this section separate two further layers: structural obstructions to global optimality and offline certificates that exclude those obstructions while controlling stochastic noise.

\subsection{Why Stationarity Does Not Imply Optimality}
\label{sec:trap-example}

First-order stationarity is insufficient for global optimality because, under the deterministic product-gradient flow $\dot{\HH}=\nabla J(\HH)$, a globally beneficial move may require several players to change arms jointly while every unilateral gradient points away from that move. The following two-player, three-arm example makes this coordination obstruction explicit:
\begin{equation}
\label{eq:reward_table}
\begin{array}{c|ccc}
 & A_2=0 & A_2=1 & A_2=2 \\ \hline
A_1=0 & 0.9 & 0.9 & 0\\
A_1=1 & 0.9 & 0.9 & 0\\
A_1=2 & 0 & 0 & 1 .
\end{array}
\end{equation}
Although the joint action $\ba^\star=(2,2)$ is globally optimal, the block $\{0,1\}^2$ is locally attracting under this deterministic product-gradient flow. Letting $q_i=\pi_i(2)$, the objective depends on $\HH$ only through $(q_1,q_2)$ and satisfies
\begin{equation}
J(\HH)=0.9(1-q_1)(1-q_2)+q_1q_2 .
\end{equation}
In contrast to the single-player bandit setting, the local advantage of arm $2$ is averaged with respect to the current policy of the other player and may therefore point away from the globally optimal profile. The following proposition formalizes this effect by identifying an invariant basin of attraction that contains the default initialization.

\begin{proposition}[Block-trap basin]
\label{prop:trap}
For the reward table in \eqref{eq:reward_table}, the region
\begin{equation}
q_1<\frac{9}{19},\quad q_2<\frac{9}{19}
\end{equation}
is forward-invariant under this flow, and every trajectory initialized within this region is attracted to the suboptimal block $\{0,1\}^2$. The default uniform initialization satisfies $q_1=q_2=1/3$ and thus lies within this basin of attraction.
\end{proposition}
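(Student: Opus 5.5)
The plan is to reduce the flow $\dot{\HH}=\nabla J(\HH)$ to a closed differential inequality for $(q_1,q_2)$, and then read off invariance and attraction from the sign structure.

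\textbf{Step 1: gradient of $J$ in terms of local advantages.} Fix player $1$; player $2$ is symmetric. Write $\bar r_1(x)=\E_{A_2\sim\pi_2}[r(x,A_2)]$. From the table, $\bar r_1(0)=\bar r_1(1)=0.9(1-q_2)$ and $\bar r_1(2)=q_2$. Set $D_1=\bar r_1(2)-\bar r_1(0)=1.9q_2-0.9$. By the softmax score identity in the proof of Proposition~\ref{prop:unbiased},
\begin{equation*}
\partial J/\partial H_1(x)=\pi_1(x)\bigl(\bar r_1(x)-\textstyle\sum_y\pi_1(y)\bar r_1(y)\bigr).
\end{equation*}
This gives $\partial J/\partial H_1(2)=q_1(1-q_1)D_1$, and $\partial J/\partial H_1(x)=-\pi_1(x)q_1D_1$ for $x\in\{0,1\}$.

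\textbf{Step 2: chain rule for $q_1$.} Next compute $\dot q_1=\sum_x (\partial q_1/\partial H_1(x))\,\dot H_1(x)$, using $\partial q_1/\partial H_1(x)=q_1(\one\{x=2\}-\pi_1(x))$. Substituting Step 1 and simplifying should give
\begin{equation*}
\dot q_1=q_1^2\,D_1\Bigl[(1-q_1)^2+\textstyle\sum_{x\in\{0,1\}}\pi_1(x)^2\Bigr],
\end{equation*}
and symmetrically $\dot q_2=q_2^2 D_2[\cdots]$ with $D_2=1.9q_1-0.9$. The bracket is strictly positive, so $\operatorname{sign}(\dot q_1)=\operatorname{sign}(q_2-9/19)$. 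This sign identity is the key step, and I expect the algebra here to be the main place to be careful, because the dependence on $\pi_1(0),\pi_1(1)$ individually must collapse into a harmless positive factor.

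\textbf{Step 3: invariance and attraction.} The flow exists for all time, because $\nabla J$ is bounded and globally Lipschitz by Lemma~\ref{lem:smoothness}. Suppose $q_1(0),q_2(0)<9/19$. Then both derivatives are negative, so both $q_i$ are nonincreasing as long as the region holds. A continuity (first-exit-time) argument then shows the region is never left, which gives forward invariance.

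For attraction, let $\bar q=\max_i q_i(0)<9/19$. Along the trajectory, $D_i\le 1.9\bar q-0.9=:-c<0$, and the bracket is at least $(1-9/19)^2=(10/19)^2$. Hence $\dot q_i\le -c(10/19)^2q_i^2$. Comparison with $\dot y=-\kappa y^2$ gives $q_i(t)\le q_i(0)/(1+\kappa q_i(0)t)\to0$. Therefore $\pi_{\HH_t}$ concentrates on $\{0,1\}^2$, and $J(\HH_t)\to0.9<1=J^\star$.

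\textbf{Step 4: default initialization.} With $\HH_1=0$ we have $q_1=q_2=1/3$, and $1/3<9/19$ because $19<27$, so the default initialization lies in the basin.
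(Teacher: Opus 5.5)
Your proof is correct and follows the paper's argument essentially step for step. You derive the same exact reduced flow $\dot q_i=q_i^2\bigl((1-q_i)^2+\pi_i(0)^2+\pi_i(1)^2\bigr)(1.9q_j-0.9)$, get forward invariance from the sign structure, and use the monotone decrease of $q_j$ to keep $1.9q_j-0.9$ bounded away from zero. The only differences are minor: you close the attraction step with a Riccati comparison that gives an explicit $O(1/t)$ decay, whereas the paper argues by contradiction on the monotone limit, and you spell out global existence and the first-exit argument, which the paper leaves implicit.
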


\begin{proof}[Proof sketch]
Because the two suboptimal arms of each player constitute a block with reward $0.9$, the softmax gradient component that increases the probability assigned to the optimal arm of player~1 has the sign of $1.9q_2-0.9$, and the corresponding component for player~2 follows by symmetry. Consequently, $q_1=q_2=9/19$ is the stationary point that separates the two regimes in the reduced coordinate system. If both probabilities are below $9/19$, both gradient components are negative. Therefore, the flow decreases both optimal-arm probabilities and cannot escape the lower-left region. The complete derivation is provided in the supplementary material.
\end{proof}

Proposition~\ref{prop:trap} shows that Theorem~\ref{thm:stationarity} does not imply global optimality. The obstruction is structural: the globally optimal joint action consists of one local arm per player, and any such arm may have a negative product-policy advantage unless the other players already coordinate with it. The following proposition shows that this limitation persists beyond the deterministic basin analysis.

\begin{proposition}[Trap-minor obstruction]
\label{prop:trap-structure}
Suppose that a deterministic two-player reward table contains a $2\times2$ minor with a corner $(x_0,y_0)$ whose reward is below $J^\star$ but strictly exceeds both adjacent entries. If this corner also strictly dominates every unilateral deviation outside the minor, then its pure product policy has an open basin of attraction $\mathcal B$.
For every compact set $K\subset\mathcal B$ and every fixed $\alpha>0$, there exist $C_K<\infty$ and $t_K(\alpha)<\infty$ such that, for any finite-preference initialization that induces a product policy in $K$, the update in~\eqref{eq:update} with $\gamma_t=\alpha/(t+t_0)$ converges to the trap with probability at least
\begin{equation}
\left[1-\frac{C_K\alpha^2}{t_0}\right]_+
\end{equation}
for every $t_0\ge t_K(\alpha)$, where $[z]_+=\max\{z,0\}$.
\end{proposition}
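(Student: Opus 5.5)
Fix a trap minor with rows $\{x_0,x_1\}$ and columns $\{y_0,y_1\}$. Since the reward table is deterministic, write $r(x_0,y_0)=c$. The hypotheses say that $c$ strictly exceeds $r(x_1,y_0)$ and $r(x_0,y_1)$, and every other unilateral deviation $r(x,y_0)$, $r(x_0,y)$. Set the margin
\[
\delta=c-\max\Bigl\{\max_{x\ne x_0}r(x,y_0),\ \max_{y\ne y_0}r(x_0,y)\Bigr\}>0 .
\]

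\textbf{Deterministic part.} I will work in the simplex coordinates $(\pi_1,\pi_2)$. Because the objective is invariant under adding a common constant to a player's preferences, the preferences themselves diverge at a pure profile, so the attractor must be defined at the level of policies. Near the pure profile $e_{x_0}\otimes e_{y_0}$, the gradient coordinate for player 1 and arm $x\ne x_0$ is
\[
\pi_1(x)\bigl(Q_1(x)-J\bigr), \qquad Q_1(x)=\sum_y \pi_2(y)r(x,y).
\]
When $\pi_2$ is within $\epsilon$ of $e_{y_0}$, we have $Q_1(x)\le r(x,y_0)+\epsilon$ and $J\ge c-O(\epsilon)$. Hence $\dot H_1(x)-\dot H_1(x_0)\le -\pi_1(x_0)\,\delta/2$ once $\epsilon$ is small, and the symmetric bound holds for player 2. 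This gives a Lyapunov function
\[
V(\pi)=(1-\pi_1(x_0))+(1-\pi_2(y_0)),
\]
which strictly decreases along the flow on a small neighbourhood $U_\epsilon$, with $\dot V\le -c' V$. So $U_\epsilon$ is forward-invariant and attracted to the trap. I then define $\mathcal B$ as the set of policies whose flow enters $U_\epsilon$ in finite time. By continuity of the flow this set is open, and it contains $U_\epsilon$, so it is a nonempty open basin.

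\textbf{Stochastic part.} This is a standard stochastic-approximation "local capture" argument. For $K\subset\mathcal B$ compact, compactness and continuous dependence give a uniform time $\tau_K$ after which every flow started in $K$ lies in $U_{\epsilon/2}$, with a uniform distance margin $\rho$ to the boundary of $U_{\epsilon}$. Let $\sigma(s)=\sum_{t\le s}\gamma_t$ be the ODE clock. I decompose the iterates as
\[
\HH_{t+1}=\HH_t+\gamma_t\nabla J(\HH_t)+\gamma_t\xi_t,
\]
where $\xi_t$ is a martingale difference bounded by a constant (this uses Proposition~\ref{prop:unbiased} and $R_t\in[0,1]$). By Doob's inequality,
\[
\Pr\Bigl(\sup_{s}\Bigl\|\sum_{t\le s}\gamma_t\xi_t\Bigr\|>\rho'\Bigr)\le \frac{C\sum_t\gamma_t^2}{\rho'^2}\le \frac{C\alpha^2}{\rho'^2 t_0}.
\]
On the complement of this event, a discrete Gronwall comparison with the $L$-Lipschitz gradient (Lemma~\ref{lem:smoothness}) keeps the interpolated iterates within $\rho$ of the flow over the window $[0,\tau_K]$. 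The discretization error there is $O(L\gamma_{t_0}\tau_K e^{L\tau_K})$, and requiring it to be at most $\rho/2$ fixes $t_K(\alpha)$. Gronwall should be applied in policy coordinates, or to preference differences, which are Lipschitz-controlled.

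After the window, I argue inside $U_\epsilon$ on the same good event. Write $D=H_1(x)-H_1(x_0)$ for the preference difference. Its drift is at most $-\pi_1(x_0)\delta/2\le-\delta/4$, and its noise is a martingale with total variance $O(\alpha^2/t_0)$. The cumulative drift $\sum\gamma_t\delta/4$ diverges because $\sum_t\alpha/(t+t_0)=\infty$, while the noise stays below $\rho'$. Hence each such difference tends to $-\infty$, the iterates never leave $U_\epsilon$, and the policy converges to the pure trap. Collecting the failure probabilities gives success probability at least $[1-C_K\alpha^2/t_0]_+$.

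\textbf{Main obstacle.} The hardest step is the uniform ODE-tracking bound over unbounded preference space. Preferences are not confined to a compact set and grow without bound near the trap, so Lipschitz constants must be expressed in the bounded quantities $\pi$ or preference differences. The noise must also be controlled over an infinite horizon, which is why I use a single global martingale event rather than a union over windows. I would first try to handle both the finite window and the tail with that one maximal inequality.
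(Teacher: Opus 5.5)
Your architecture (one global Doob event, Gronwall tracking over a finite ODE window, then confinement near the corner) is sound. However, the drift estimate you use for both the Lyapunov rate and the tail is false. The softmax gradient vanishes at pure policies. Using $Q_1(x_0)-J=\sum_{x'\neq x_0}\pi_1(x')\bigl(Q_1(x_0)-Q_1(x')\bigr)$ and $\pi_1(x)\le 1-\pi_1(x_0)$, one gets
\[
\bigl|\dot H_1(x)-\dot H_1(x_0)\bigr|
=\bigl|\pi_1(x)\bigl(Q_1(x)-J\bigr)-\pi_1(x_0)\bigl(Q_1(x_0)-J\bigr)\bigr|
\le 2\bigl(1-\pi_1(x_0)\bigr).
\]
This tends to $0$ at the trap, so it cannot be $\le-\pi_1(x_0)\delta/2\le-\delta/4$. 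For example, take $r(0,0)=0.5$, $r(1,1)=1$, $r(0,1)=r(1,0)=0$, with $p=\pi_1(0)$ and $q=\pi_2(0)$. Then $\dot H_1(1)-\dot H_1(0)=-2p(1-p)(1.5q-1)\approx-(1-p)$ and $\dot p\approx(1-p)^2$ near $(1,1)$.

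Hence $\dot V\le-c'V$ also fails. The true local behaviour is $\dot V\lesssim-c'V^2$: asymptotic but not exponential stability, matching the $q_i^2$ factor in the paper's block-trap flow. This does not hurt your deterministic part, which only needs $\dot V<0$ off the trap and forward invariance of $U_\epsilon$. It does break your tail step, ``the cumulative drift $\sum_t\gamma_t\delta/4$ diverges, hence $D\to-\infty$.'' In fact $D_t$ drifts only at a rate of order $e^{D_t}$ and reaches $-\infty$ only logarithmically in ODE time.

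The repair needs only the sign of the drift. Describe $U_\epsilon$ by thresholds on the preference differences. On $U_\epsilon$ one has $Q_1(x_0)\ge J$ and $Q_1(x)-J\le-\delta/2$, so the drift of $D=H_1(x)-H_1(x_0)$ is at most $-\pi_1(x)\delta/2\le 0$. On your good event, write $D_t=D_{t_1}+S_t+N_t$, where $S_t$ is the nonincreasing sum of the terms $\gamma_s\cdot(\text{drift at }s)$ and $|N_t|\le 2\sqrt2\,\rho'$. This gives confinement by induction. For convergence, suppose $S_t$ had a finite limit. Then $D_t$ would be bounded below, so $\pi_{1,t}(x)=\pi_{1,t}(x_0)e^{D_t}$ would be bounded below, and the drift would be bounded above by a negative constant. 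Since $\sum_t\gamma_t=\infty$, this forces $S_t\to-\infty$, a contradiction. So every preference difference tends to $-\infty$.

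With this fix your route is a genuine alternative to the paper's. The paper passes to policy coordinates through the multiplicative softmax recursion, paying an $O(\gamma_t^2)$ Taylor remainder. It then proves a generic lock-in lemma and finishes with Bena\"{\i}m's attractor theorem for asymptotic pseudotrajectories, so it never needs a quantitative drift estimate near the trap. In preference space you have no remainder, since $\HH_{t+1}=\HH_t+\gamma_t(\nabla J(\HH_t)+\xi_t)$ exactly. Moreover, Lemma~\ref{lem:smoothness} already makes $\nabla J$ globally $2d$-Lipschitz on $\R^d$, so the ``main obstacle'' you flag does not arise on the finite window. Your explicit confinement argument then replaces the appeal to Bena\"{\i}m. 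That is more elementary, but it is exactly where the vanishing drift must be handled as above.
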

The supplementary material provides complete proofs and the corresponding asymptotic lower bound on the expected gap. Together, these results constitute the negative component of the framework: the obstruction is a reusable structural pattern rather than a simulation artifact. They also identify precisely why the single-player softmax-gradient mechanism fails in the multi-player setting. Given the current policies of the other players, a local component of a globally optimal joint action may have a lower expected reward than a suboptimal local arm until the other players coordinate with that component. Consequently, finite action spaces, normalized rewards, and the oblivious gradient-bandit update are insufficient, on their own, to guarantee global optimality.

\paragraph{Default-initialization block trap.}
The tied block in Proposition~\ref{prop:trap} is not directly covered by the strict trap-minor result in Proposition \ref{prop:trap-structure}. Nevertheless, for deterministic rewards, the supplementary material proves that there exist constants $C_0<\infty$ and $\alpha_0>0$, with $C_0\alpha_0^2\le1/2$, such that, for every $\alpha\in(0,\alpha_0]$, the update initialized with zero preferences and run with $\gamma_t=\alpha/(t+1)$ satisfies
\begin{equation}
\Pr\{\pi_{i,t}(2)\to0,\ i=1,2\}
\ge[1-C_0\alpha^2]_+
\ge\frac12.
\end{equation}
Thus, the stochastic update converges to the suboptimal block with a probability that is lower-bounded independently of the time horizon.

\subsection{Two Structural Certificates}

To recover guarantees of global optimality without modifying the gradient update rule, we restrict our analysis to reward tables that satisfy two properties within a region containing the trajectory. First, every suboptimal product policy must have a gradient whose magnitude is sufficient to control its optimality gap, thereby excluding the traps identified above. Second, the update variance must decrease with the gradient magnitude so that sampling noise does not dominate the resulting ascent. We formalize these requirements through product-gradient-dominance and self-bounding-noise certificates. Together, these certificates yield bounds on the product-policy gap and pseudo-regret under the specified conditions.

Fix an optimal joint action $\ba^\star=(A_1^\star,\ldots,A_N^\star)$. For a product policy $\pi=(\pi_1,\ldots,\pi_N)$, define
\begin{align}
J(\pi)
&=\sum_{\ba\in\A}
\left(\prod_{i=1}^{N}\pi_i(A_i)\right)r(\ba),\\
Q_i^\pi(x) 
&=\sum_{\ba_{-i}\in\A_{-i}}
\left(\prod_{j\neq i}\pi_j(A_j)\right)r(x,\ba_{-i}).
\end{align}
Under the product-softmax parameterization, the preference gradient has coordinates
\begin{equation}
\mathcal G_{i,x}(\pi)
=\pi_i(x)\bigl(Q_i^\pi(x)-J(\pi)\bigr),
\end{equation}
and Proposition~\ref{prop:unbiased} gives $\mathcal G(\pi_{\HH})=\nabla J(\HH)$. For the baseline-free update, define the second-moment envelope
\begin{equation}
\mathcal V(\pi)
=\sum_{\ba\in\A}
\left(\prod_{i=1}^{N}\pi_i(A_i)\right)r(\ba)
\sum_{i=1}^{N}\|\mathbf e_{A_i}-\pi_i\|_2^2,
\end{equation}
where $\mathbf e_{A_i}\in\R^{K_i}$ denotes the unit vector corresponding to $A_i$. Since $R_t^2\le R_t$, the conditional second moment of the update satisfies
$\E[\|\mathbf g_t\|_2^2\mid\F_t]\le\mathcal V(\pi_{\HH_t})$.
For $c\in(0,1]$, define
\begin{equation}
\begin{aligned}
\Pi_\star(c)
&=\{\pi:\pi_i(A_i^\star)\ge c,\ \forall i\},\\
\Pi_{\rm all}(c)
&=\{\pi:\pi_i(x)\ge c,\ \forall i,x\}.
\end{aligned}
\end{equation}
The all-arm region is considered only when it is nonempty.

\paragraph{Product-gradient-dominance certificate.}
The deterministic certificate is
\begin{equation}
\lambda_\star(c)
=\inf_{\substack{\pi\in\Pi_\star(c)\\J(\pi)<J^\star}}
\frac{\|\mathcal G(\pi)\|_2^2}{(J^\star-J(\pi))^2}.
\end{equation}
Positive $\lambda_\star(c)$ rules out suboptimal stationary points in $\Pi_\star(c)$ by requiring the gradient magnitude to control the optimality gap.

\paragraph{Self-bounding-noise certificate.}
The stochastic certificate is
\begin{equation}
M_{\rm all}(c)
=\sup_{\substack{\pi\in\Pi_{\rm all}(c)\\\|\mathcal G(\pi)\|_2>0}}
\frac{\mathcal V(\pi)}{\|\mathcal G(\pi)\|_2^2}.
\end{equation}
Set $M_{\rm all}(c)=\infty$ if $\mathcal V>0$ at any zero-gradient policy, set it to $0$ if the displayed index set is empty, and otherwise use the displayed supremum. Thus, finite $M_{\rm all}(c)$ gives $\mathcal V(\pi)\le M_{\rm all}(c)\|\mathcal G(\pi)\|_2^2$ throughout $\Pi_{\rm all}(c)$. Both certificates are offline reward-table properties rather than assumptions fitted to a trajectory.
Requiring these controls over the entire policy space may be unnecessarily restrictive. If a deterministic region $\mathcal S$ is certified to contain the iterates, it suffices to use the localized constants
\begin{equation}
\begin{aligned}
\lambda_{\mathcal S}
&=\inf_{\substack{\pi\in\mathcal S\\J(\pi)<J^\star}}
\frac{\|\mathcal G(\pi)\|_2^2}{(J^\star-J(\pi))^2},\\
M_{\mathcal S}
&=\sup_{\substack{\pi\in\mathcal S\\\|\mathcal G(\pi)\|_2>0}}
\frac{\mathcal V(\pi)}{\|\mathcal G(\pi)\|_2^2}.
\end{aligned}
\end{equation}
Use the same zero-gradient and empty-supremum conventions for $M_{\mathcal S}$. These constants suffice whenever $\mathcal S$ contains the trajectory. The next subsection applies the all-arm certificates to a finite-horizon region.

\subsection{Structural Global Rate and Regret}

When both certificates hold on a region containing the trajectory, product-gradient dominance converts every positive optimality gap into a quantitative ascent signal, while self-bounding noise ensures that the stochastic second-moment penalty remains proportional to this signal. With a sufficiently small stepsize, smoothness then guarantees a decreasing expected gap and establishes pseudo-regret bounds; uniform certificates further imply that the expected value of the product policy converges to $J^\star$. We first construct a certified finite-horizon region that contains every iterate.

\begin{lemma}[Finite-horizon exploration certificate]
\label{lem:exploration}
Fix a horizon $T$, and let
\begin{equation}
\Gamma_T=\sum_{t=1}^{T}\gamma_t,
\,
K_{\max}=\max_{i\in[N]}K_i,
\,
c_T=\frac{e^{-2\Gamma_T}}{K_{\max}}.
\end{equation}
If $H_{i,1}(x)=0$ for all $i$ and $x$, then, for every $t\le T+1$, every player $i$, and every local arm $x\in\A_i$,
\begin{equation}
\pi_{i,t}(x)\ge c_T.
\end{equation}
\end{lemma}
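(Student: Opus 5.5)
The plan is to bound how far the preferences can move in $T$ rounds and then convert that into a lower bound on softmax probabilities. From \eqref{eq:define_g_it}, every coordinate satisfies $|g_{i,t}(x)|\le R_t\max\{1-\pi_{i,t}(x),\pi_{i,t}(x)\}\le 1$, because $R_t\in[0,1]$ and $\pi_{i,t}(x)\in[0,1]$. Starting from $H_{i,1}(x)=0$ and applying \eqref{eq:update}, we get, for every $t\le T+1$,
\begin{equation}
|H_{i,t}(x)|\le\sum_{s=1}^{t-1}\gamma_s|g_{i,s}(x)|\le\sum_{s=1}^{T}\gamma_s=\Gamma_T .
\end{equation}
Here we use $\gamma_s\ge 0$. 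This bound holds deterministically, that is, pathwise for every realization of arms and rewards.

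Next we insert this into the softmax \eqref{eq:softmax}. For any arm $x\in\A_i$, the numerator satisfies $\exp(H_{i,t}(x))\ge e^{-\Gamma_T}$. The denominator is a sum of $K_i$ terms, each at most $e^{\Gamma_T}$, so it is at most $K_ie^{\Gamma_T}$. Therefore
\begin{equation}
\pi_{i,t}(x)\ge\frac{e^{-\Gamma_T}}{K_ie^{\Gamma_T}}=\frac{e^{-2\Gamma_T}}{K_i}\ge\frac{e^{-2\Gamma_T}}{K_{\max}}=c_T ,
\end{equation}
which is the claim.

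There is no real obstacle here. The only points that need care are the index range and the coordinate bound. For the index range, the iterate $\HH_{T+1}$ uses only the stepsizes $\gamma_1,\ldots,\gamma_T$, so $\Gamma_T$ bounds the accumulated movement for every $t\le T+1$. For the coordinate bound, $|g_{i,t}(x)|\le1$ is uniform over coordinates, so no dimension factor enters. Shift-invariance of the softmax could sharpen the constant (for example, since $\sum_x g_{i,t}(x)=0$), but the crude two-sided bound already yields exactly the stated $c_T$.
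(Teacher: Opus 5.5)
Your proposal is correct and follows the paper's proof: bound each update coordinate by $|g_{i,t}(x)|\le1$, so that $|H_{i,t}(x)|\le\Gamma_T$ pathwise for all $t\le T+1$. Then lower-bound the softmax numerator by $e^{-\Gamma_T}$ and upper-bound the denominator by $K_ie^{\Gamma_T}$, which gives $c_T$. Your explicit use of $\gamma_s\ge0$ and your check of the index range are exactly the points the paper's short argument relies on implicitly.
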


\begin{proof}
Since $R_t\in[0,1]$ and the centered score lies in $[-1,1]$, $|g_{i,t}(x)|\le1$. Hence, for $t\le T+1$, $|H_{i,t}(x)|\le\sum_{s=1}^{t-1}\gamma_s\le\Gamma_T$. The softmax formula then gives $\pi_{i,t}(x)\ge e^{-\Gamma_T}/(K_i e^{\Gamma_T})\ge e^{-2\Gamma_T}/K_{\max}=c_T$.
\end{proof}

Lemma~\ref{lem:exploration} therefore places the entire finite-horizon trajectory in
$\Pi_{\rm all}(c_T)\subseteq\Pi_\star(c_T)$, where both certificates can be evaluated. To state the resulting performance guarantee, define the product-policy gap by
\begin{equation}
D_t=J^\star-J(\pi_{\HH_t}),
\quad
d_t=\E[D_t],
\end{equation}
and define the cumulative mean-reward regret random variable, whose expectation is the expected pseudo-regret, as
\begin{equation}
\operatorname{Reg}(T)=
\sum_{t=1}^{T}
\bigl(J^\star-r(\ba_t)\bigr).
\end{equation}
On this certified region, $\lambda_\star$ converts gradient progress into gap reduction, while $M_{\rm all}$ controls stochastic update noise. These two controls yield the following finite-horizon guarantee.

\begin{theorem}[Certificate-conditioned finite-horizon rate and pseudo-regret]
\label{thm:structural}
Fix a time horizon $T$. Starting from $H_{i,1}(x)=0$ for all $i$ and $x$, run the update in~\eqref{eq:update} using the step sizes $\{\gamma_t\}_{t=1}^{T}$, and define
\begin{equation}
c_T=\frac{e^{-2\Gamma_T}}{K_{\max}},
\quad
\lambda_T=\lambda_\star(c_T),
\quad
M_T=M_{\rm all}(c_T).
\end{equation}
Assume that $\lambda_T>0,
0<M_T<\infty$, 
and $0\le \gamma_t\le \frac{1}{LM_T},
\, t=1,\ldots,T$.
We further define
\begin{equation}
\rho_t=
\gamma_t\left(1-\frac{LM_T\gamma_t}{2}\right).
\end{equation}
If $D_1>0$, then, for every $t\le T+1$,
\begin{equation}
d_t
\le
\frac{1}
{D_1^{-1}+\lambda_T\sum_{s=1}^{t-1}\rho_s}.
\end{equation}
Moreover,
\begin{equation}
\E[\operatorname{Reg}(T)]
\le
\sum_{t=1}^{T}
\frac{1}
{D_1^{-1}+\lambda_T\sum_{s=1}^{t-1}\rho_s}.
\end{equation}
\end{theorem}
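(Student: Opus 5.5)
The argument is a one-step ascent inequality for the product-policy gap, followed by a Jensen/convexity step and the standard reciprocal-recursion argument for $d_t$. The regret bound is then obtained by summing.

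\textbf{Step 1 (one-step conditional ascent).} Fix $t\le T$. By Lemma~\ref{lem:exploration}, $\pi_{\HH_t}\in\Pi_{\rm all}(c_T)\subseteq\Pi_\star(c_T)$ almost surely, so both certificates apply at $\HH_t$. Apply Lemma~\ref{lem:smoothness} with $\HH'=\HH_{t+1}=\HH_t+\gamma_t\mathbf g_t$ and take $\E[\cdot\mid\F_t]$. Proposition~\ref{prop:unbiased} gives $\E[\mathbf g_t\mid\F_t]=\nabla J(\HH_t)$, and the second-moment envelope gives $\E[\|\mathbf g_t\|_2^2\mid\F_t]\le\mathcal V(\pi_{\HH_t})$. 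Together these yield
\[
\E[J(\HH_{t+1})\mid\F_t]\ge J(\HH_t)+\gamma_t\|\nabla J(\HH_t)\|_2^2-\tfrac{L\gamma_t^2}{2}\mathcal V(\pi_{\HH_t}).
\]
If $\nabla J(\HH_t)\neq0$, finiteness of $M_T$ gives $\mathcal V\le M_T\|\nabla J\|_2^2$. If $\nabla J(\HH_t)=0$, the convention $M_T<\infty$ forces $\mathcal V=0$. In both cases the right-hand side is at least $J(\HH_t)+\rho_t\|\nabla J(\HH_t)\|_2^2$. The stepsize condition $\gamma_t\le 1/(LM_T)$ ensures $\rho_t\ge\gamma_t/2\ge0$. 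Next, invoke $\lambda_T$: if $D_t>0$ then $\|\nabla J(\HH_t)\|_2^2\ge\lambda_T D_t^2$, and if $D_t=0$ this inequality holds trivially. Therefore
\[
\E[D_{t+1}\mid\F_t]\le D_t-\rho_t\lambda_T D_t^2 .
\]

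\textbf{Step 2 (expectation and recursion).} Take full expectations and apply Jensen, $\E[D_t^2]\ge d_t^2$, to obtain $d_{t+1}\le d_t-\lambda_T\rho_t d_t^2$. This is the step I expect to need the most care. The map $x\mapsto x-a x^2$ is increasing only for $x\le 1/(2a)$, so the comparison argument requires $\lambda_T\rho_t d_t\le 1$, or something similar. I would first try to bound $\lambda_T\rho_t\le1$ directly. A gap-dominance ratio bounded via $\|\mathcal G\|_2^2\le$ something times $D^2$ is not obviously available. Instead, I would avoid monotonicity altogether by dividing. If $d_{t+1}=0$ the claimed bound is trivial. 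Otherwise $d_{t+1}>0$ forces $d_t>0$ and $1-\lambda_T\rho_td_t\ge d_{t+1}/d_t>0$, and then
\[
\frac1{d_{t+1}}\ge\frac{1}{d_t(1-\lambda_T\rho_t d_t)}\ge\frac1{d_t}+\lambda_T\rho_t,
\]
using $1/(1-u)\ge1+u$ for $u\in[0,1)$. This needs no monotonicity, which resolves the concern. Iterating from $d_1=D_1$ (deterministic, since $\HH_1=0$) gives $1/d_t\ge D_1^{-1}+\lambda_T\sum_{s<t}\rho_s$ for all $t\le T+1$.

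\textbf{Step 3 (regret).} Since $\ba_t\sim\pi_{\HH_t}$ given $\F_t$, we have $\E[r(\ba_t)\mid\F_t]=J(\HH_t)$, so $\E[J^\star-r(\ba_t)]=d_t$. Summing the rate bound over $t=1,\dots,T$ gives the pseudo-regret bound.
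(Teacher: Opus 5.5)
Your proposal is correct and follows the paper's proof essentially step for step: the exploration lemma, smoothness plus both certificates giving $\E[D_{t+1}\mid\F_t]\le D_t-\rho_t\lambda_TD_t^2$, Jensen, the reciprocal telescoping from $d_1=D_1$, and summing $d_t$ for the pseudo-regret. Your $1/(1-u)\ge 1+u$ step is an equivalent form of the paper's bound $(d_t-d_{t+1})/d_t^2\ge\rho_t\lambda_T$, and your explicit handling of the zero-gradient, $D_t=0$, and $\rho_t\ge0$ cases supplies details the paper leaves implicit.
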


% \begin{proof}[Proof sketch]
% If $D_1=0$, the product-policy gap is already zero, and the regret bound is zero. Now consider $D_1>0$.
% Lemma~\ref{lem:exploration} keeps the trajectory inside $\Pi_{\rm all}(c_T)$, and therefore also inside $\Pi_\star(c_T)$. The two certificates imply
% \begin{equation}
% \begin{aligned}
% \|\nabla J(\HH_t)\|_2^2
% &\ge \lambda_TD_t^2,\\
% \E[\|\mathbf g_t\|_2^2\mid\F_t]
% &\le M_T\|\nabla J(\HH_t)\|_2^2 .
% \end{aligned}
% \end{equation}
% Combining these inequalities with the smoothness-based ascent step gives
% $\E[D_{t+1}\mid\F_t]\le D_t-\rho_t\lambda_TD_t^2$. The Jensen inequality gives $d_{t+1}\le d_t-\rho_t\lambda_Td_t^2$. If $d_{t+1}=0$, the bound follows immediately. Otherwise, $1/d_{t+1}-1/d_t\ge(d_t-d_{t+1})/d_t^2\ge\rho_t\lambda_T$, and summing yields the reciprocal bound. The conditional expected regret equals the current product-policy gap, so summing the same bound proves the regret statement.
% \end{proof}

The complete proof is given in the supplementary material. Theorem~\ref{thm:structural} is an offline finite-horizon guarantee: choose $T$ and a stepsize schedule, certify $(\lambda_T,M_T)$ on the induced region, and verify the stepsize condition. For a constant stepsize $\gamma_t=\eta$ with $0<\eta\le1/(LM_T)$, let
\begin{equation}
\beta_T
=
\lambda_T\eta\left(1-\frac{LM_T\eta}{2}\right).
\end{equation}
Then
\begin{equation}
d_t\le
\frac{1}{D_1^{-1}+\beta_T(t-1)}
\end{equation}
and
\begin{equation}
\E[\operatorname{Reg}(T)]
\le
D_1+
\frac{1}{\beta_T}
\log\bigl(1+\beta_TD_1(T-1)\bigr).
\end{equation}
Thus, uniform certificates yield $d_t=O(1/t)$ and $\E[\operatorname{Reg}(T)]=O(\log T)$; otherwise, the guarantee remains horizon dependent. The next subsection identifies sparse and dense reward classes with analytic certificates.

\begin{figure*}[t]
\centering
\includegraphics[width=\textwidth]{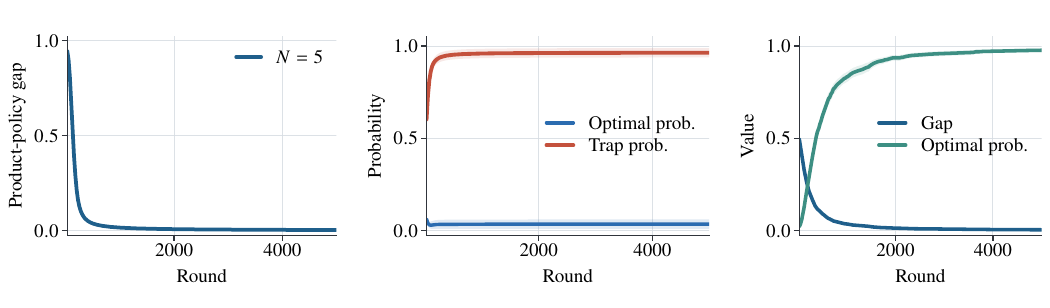}
\caption{Sparse coordination (left), block trap (center), and AI routing (right); bands show 95\% CIs.}
\label{fig:main-experiments}
\end{figure*}

\subsection{Explicit Verifiable Classes}

To make Theorem~\ref{thm:structural} operational, we derive analytic certificates for two nontrivial classes of reward tables: sparse coordination tables admit a trajectory-invariant region that yields uniform rates, whereas dense locally dominant tables admit explicit finite-horizon certificates.

\begin{corollary}[Optimally supported common rewards]
\label{cor:optimally-supported}
Suppose there is a unique optimal joint action
\begin{equation}
\ba^\star=(A_1^\star,\ldots,A_N^\star)
\end{equation}
such that
\begin{equation}
r(\ba^\star)=J^\star>0,
\qquad
r(\ba)=0
\quad
\forall \ba\neq\ba^\star.
\end{equation}
Let $c_\star=1/K_{\max}$. For the trajectory of \eqref{eq:update} initialized with $H_{i,1}(x)=0$,
\begin{equation}
\pi_{i,t}(A_i^\star)\ge \frac{1}{K_i}
\quad
\text{for all } i \text{ and } t.
\end{equation}
Let $\mathcal S_{\rm sp}=\{\pi:\pi_i(A_i^\star)\ge1/K_i,\ \forall i\}$ and $M_{\rm sp}=M_{\mathcal S_{\rm sp}}$. Every iterate lies in $\mathcal S_{\rm sp}$ and assigns probability at least $c_\star^N$ to $\ba^\star$, yielding
\begin{equation}
\lambda_\star(c_\star)
\ge
\frac{c_\star^{2N}}{N^2},
\qquad
M_{\rm sp}
\le
\frac{1}{J^\star c_\star^N}.
\end{equation}
Consequently, for any constant stepsize
\begin{equation}
0<\eta\le \frac{J^\star c_\star^N}{L},
\end{equation}
update \eqref{eq:update} satisfies $d_t=O(1/t)$ and
$\E[\operatorname{Reg}(T)]=O(\log T)$.
\end{corollary}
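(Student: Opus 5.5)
The plan is to exploit the fact that, for an optimally supported table, every relevant quantity factors through the product $P(\pi)=\prod_{i}p_i$, where $p_i=\pi_i(A_i^\star)$. Since $r(\ba)=0$ for $\ba\ne\ba^\star$, we have $J(\pi)=J^\star P$. Moreover, $Q_i^\pi(A_i^\star)=J^\star\prod_{j\ne i}p_j$ and $Q_i^\pi(x)=0$ for $x\ne A_i^\star$. Hence the gradient has the closed form
\begin{equation}
\mathcal G_i(\pi)=J^\star P\,\bigl(\mathbf e_{A_i^\star}-\pi_i\bigr),
\end{equation}
because $\mathcal G_{i,A_i^\star}=p_i J^\star P(1/p_i-1)\cdot p_i/p_i=J^\star P(1-p_i)$ and $\mathcal G_{i,x}=-\pi_i(x)J^\star P$ for $x\ne A_i^\star$. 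Likewise, only $\ba^\star$ contributes to the envelope, so
\begin{equation}
\mathcal V(\pi)=J^\star P\sum_i\|\mathbf e_{A_i^\star}-\pi_i\|_2^2.
\end{equation}
I would record these two identities first, since everything else follows from them.

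\emph{Invariance.} Since $R_t\ge0$ and $r(\ba)=0$ off $\ba^\star$, we get $R_t=0$ almost surely whenever $\ba_t\ne\ba^\star$, and in that case $\mathbf g_t=0$. When $\ba_t=\ba^\star$, each player sees $A_{i,t}=A_i^\star$. Its update therefore raises $H_i(A_i^\star)$ by $\gamma_tR_t(1-p_i)\ge0$ and lowers every other preference by $\gamma_tR_t\pi_i(x)\ge0$. Consequently $p_i$ is pathwise nondecreasing. Starting from zero preferences, this gives $p_{i,t}\ge1/K_i\ge c_\star$ for all $t$. So every iterate lies in $\mathcal S_{\rm sp}\subseteq\Pi_\star(c_\star)$ and satisfies $P\ge c_\star^N$.

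\emph{Certificates.} The noise certificate is immediate from the identities above: $\|\mathcal G\|_2^2=(J^\star P)^2\sum_i\|\mathbf e_{A_i^\star}-\pi_i\|_2^2$, so $\mathcal V/\|\mathcal G\|_2^2=1/(J^\star P)\le 1/(J^\star c_\star^N)$ on $\mathcal S_{\rm sp}$. The gradient vanishes only when every $p_i=1$, and there $\mathcal V=0$ as well, so the $\infty$ convention is never triggered. For dominance on $\Pi_\star(c_\star)$, I would bound the gap by $J^\star(1-P)\le J^\star\sum_i(1-p_i)\le NJ^\star\max_i(1-p_i)$. I would then lower bound the gradient norm by its largest starred coordinate, $\|\mathcal G\|_2^2\ge (J^\star P)^2\max_i(1-p_i)^2$. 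Combining these with $P\ge c_\star^N$ gives $\lambda_\star(c_\star)\ge c_\star^{2N}/N^2$.

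\emph{Rates.} Finally, I would invoke Theorem~\ref{thm:structural} in its localized form, with $\mathcal S=\mathcal S_{\rm sp}$, $\lambda_{\mathcal S}\ge c_\star^{2N}/N^2$ (since $\mathcal S_{\rm sp}\subseteq\Pi_\star(c_\star)$) and $M_{\mathcal S}=M_{\rm sp}$. The stepsize condition $\eta\le J^\star c_\star^N/L\le 1/(LM_{\rm sp})$ holds. If all $K_i=1$ then $D_1=0$ and there is nothing to prove; otherwise $D_1>0$. The constant-stepsize consequence stated after Theorem~\ref{thm:structural} then gives $d_t\le 1/(D_1^{-1}+\beta(t-1))$ with a horizon-free $\beta$, and hence $\E[\operatorname{Reg}(T)]=O(\log T)$.

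The main obstacle I anticipate is justifying this localized, horizon-uniform use of Theorem~\ref{thm:structural}, which is stated with the $c_T$-dependent all-arm constants. I would handle it by re-running its proof skeleton. That skeleton consists of the smoothness ascent step from Lemma~\ref{lem:smoothness}, the bound $\E[\|\mathbf g_t\|_2^2\mid\F_t]\le M\|\nabla J\|_2^2$, the bound $\|\nabla J\|_2^2\ge\lambda D_t^2$, the recursion $\E[D_{t+1}\mid\F_t]\le D_t-\rho\lambda D_t^2$, Jensen, and reciprocal summation. The point to check is that this argument only ever evaluates the certificates at the iterates, which the invariance step guarantees lie in $\mathcal S_{\rm sp}$ for all $t$ and independently of $T$.
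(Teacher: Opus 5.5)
Your proposal is correct and follows the paper's proof. You use the same closed forms $\mathcal G_i(\pi)=J^\star P(\mathbf e_{A_i^\star}-\pi_i)$ and $\mathcal V(\pi)=J^\star P\sum_i\|\mathbf e_{A_i^\star}-\pi_i\|_2^2$, and the same invariance argument from $R_t=0$ off $\ba^\star$: you phrase it as monotonicity of $\pi_{i,t}(A_i^\star)$, the paper as nonnegativity of $H_{i,t}(A_i^\star)-H_{i,t}(x)$. The union-bound step $1-P\le\sum_i(1-p_i)$ for $\lambda_\star(c_\star)$ and the horizon-free rerun of the Theorem~\ref{thm:structural} recursion on the invariant region also match, with your treatment of $D_1=0$ and your check that certificates are evaluated only at iterates being slightly more explicit than the paper.
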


% \begin{proof}[Proof sketch]
% Only $\ba^\star$ yields a positive reward. Thus, non-optimal samples leave the preferences unchanged, whereas samples of $\ba^\star$ weakly increase the preference of every optimal local arm, implying $\pi_{i,t}(A_i^\star)\ge1/K_i$. On this invariant region, $J(\pi)=J^\star\prod_i\pi_i(A_i^\star)$, and a direct calculation yields the stated gradient and noise-ratio bounds. The details are provided in the supplementary material.
% \end{proof}

The complete proof is given in the supplementary material. This corollary relies on the sparsity of the reward table. The following proposition removes this requirement for dense reward tables in which the optimal local arms dominate all alternatives under every opponent profile, thereby excluding the coordination reversal identified in Proposition~\ref{prop:trap}.

\begin{proposition}[Local dominance certificates]
\label{prop:local-dominance}
Suppose the optimal joint action
$\ba^\star=(A_1^\star,\ldots,A_N^\star)$
satisfies uniform local dominance: for some $\delta>0$, every player $i$, arm $x\neq A_i^\star$, and opponent profile $\ba_{-i}$ obey
$r(A_i^\star,\ba_{-i})-r(x,\ba_{-i})\ge \delta$. Then, for every $c\in(0,1]$, $\lambda_\star(c)\ge(c\delta/N)^2$. If $c\le1/K_{\max}$ and at least one player has a non-optimal arm, define $\mu(c)=1-\prod_i(1-(K_i-1)c)$. Then
\begin{equation}
M_{\rm all}(c)\le \frac{2N^3}{c^2\delta^4\mu(c)^2}.
\end{equation}
Thus every locally dominant dense table has explicit finite-horizon certificates for Theorem~\ref{thm:structural}.
\end{proposition}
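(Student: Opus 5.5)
The plan is to get both certificates from a single lower bound on the gradient coordinates at the optimal arms, and to compare that bound with an upper bound on the gap and a crude upper bound on $\mathcal V$.

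\emph{Step 1 (gradient at the optimal arms).} Fix a product policy $\pi$ and write $p_i=\pi_i(A_i^\star)$. Since $J(\pi)=\sum_{x}\pi_i(x)Q_i^\pi(x)$, we have
\begin{equation}
\mathcal G_{i,A_i^\star}(\pi)=p_i\sum_{x\neq A_i^\star}\pi_i(x)\bigl(Q_i^\pi(A_i^\star)-Q_i^\pi(x)\bigr).
\end{equation}
Uniform local dominance holds pointwise in $\ba_{-i}$, so averaging over $\prod_{j\ne i}\pi_j$ gives $Q_i^\pi(A_i^\star)-Q_i^\pi(x)\ge\delta$. Hence $\mathcal G_{i,A_i^\star}(\pi)\ge \delta\,p_i(1-p_i)\ge 0$. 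Keeping only these $N$ coordinates and applying Cauchy--Schwarz,
\begin{equation}
\|\mathcal G(\pi)\|_2\ge \frac{1}{\sqrt N}\sum_{i}\mathcal G_{i,A_i^\star}(\pi)\ge \frac{\delta}{\sqrt N}\sum_i p_i(1-p_i).
\end{equation}

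\emph{Step 2 (gap bound by a hybrid argument).} Let $\pi^{(k)}$ denote the policy in which players $1,\dots,k$ play $A_i^\star$ deterministically and the remaining players use $\pi$. Then
\begin{equation}
J^\star-J(\pi)=\sum_{k=1}^N\bigl(J(\pi^{(k)})-J(\pi^{(k-1)})\bigr).
\end{equation}
The $k$-th increment equals $\sum_{x\ne A_k^\star}\pi_k(x)$ times an average of reward differences lying in $[-1,1]$. Therefore $J^\star-J(\pi)\le\sum_i(1-p_i)$.

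\emph{Step 3 (dominance certificate).} On $\Pi_\star(c)$ we have $p_i\ge c$. Step~1 then gives $\|\mathcal G(\pi)\|_2\ge (c\delta/\sqrt N)\sum_i(1-p_i)$, and Step~2 turns this into
\begin{equation}
\|\mathcal G(\pi)\|_2^2\ge \frac{c^2\delta^2}{N}\,(J^\star-J(\pi))^2\ge\Bigl(\frac{c\delta}{N}\Bigr)^2(J^\star-J(\pi))^2,
\end{equation}
which proves $\lambda_\star(c)\ge(c\delta/N)^2$.

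\emph{Step 4 (noise certificate).} For the numerator, $\|\mathbf e_{A_i}-\pi_i\|_2^2=(1-\pi_i(A_i))^2+\sum_{y\ne A_i}\pi_i(y)^2\le 2$, and $r\le 1$, so $\mathcal V(\pi)\le 2N$.

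For the denominator, on $\Pi_{\rm all}(c)$ we have both $p_i\ge c$ and $1-p_i\ge (K_i-1)c$. Then
\begin{equation}
\sum_i(1-p_i)\ge\sum_i (K_i-1)c\ge 1-\prod_i\bigl(1-(K_i-1)c\bigr)=\mu(c),
\end{equation}
by the elementary inequality $1-\prod_i(1-a_i)\le\sum_i a_i$ for $a_i\in[0,1]$. This inequality applies because $c\le1/K_{\max}$ guarantees $(K_i-1)c\in[0,1]$.

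Combining with Step~3 gives $\|\mathcal G(\pi)\|_2^2\ge c^2\delta^2\mu(c)^2/N$. Since some $K_i\ge2$, $\mu(c)>0$, so the gradient never vanishes on $\Pi_{\rm all}(c)$ and the zero-gradient convention in the definition of $M_{\rm all}$ is not triggered. Therefore
\begin{equation}
M_{\rm all}(c)\le \frac{2N^2}{c^2\delta^2\mu(c)^2}\le\frac{2N^3}{c^2\delta^4\mu(c)^2}.
\end{equation}
The last inequality uses $N\ge1$ and $\delta\le1$, which holds because rewards lie in $[0,1]$.

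The final sentence of the proposition follows by taking $c=c_T\le 1/K_{\max}$ in Theorem~\ref{thm:structural}.

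The main obstacle is Step~2. One needs a gap upper bound that is linear in $\sum_i(1-p_i)$ and holds without any further structure on the dense table; the hybrid telescoping argument supplies it using only $r\in[0,1]$. Everything else is direct bounding.
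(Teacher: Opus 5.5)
Your proof is correct. It departs from the paper's in two places, and both departures sharpen the constants.

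\textbf{Dominance certificate.} The paper gets $D(\pi)\le\sum_i q_i$ from the one-line union bound $D(\pi)\le\Pr_{\ba\sim\pi}\{\ba\neq\ba^\star\}\le\sum_i q_i$. Your hybrid telescoping in Step~2 reaches the same inequality, so that step is not really the obstacle you describe. The paper then pigeonholes to a single player with $q_i\ge D(\pi)/N$ and bounds $\|\mathcal G(\pi)\|_2$ by that one coordinate. You instead keep all $N$ optimal-arm coordinates, which are nonnegative, and apply Cauchy--Schwarz. This gives $\lambda_\star(c)\ge c^2\delta^2/N$, better by a factor of $N$.

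\textbf{Noise certificate.} The paper first proves a gap floor. Moving a non-optimal profile to $\ba^\star$ one coordinate at a time shows $J^\star-r(\ba)\ge\delta$ for every $\ba\neq\ba^\star$, so $D(\pi)\ge\delta\mu(c)$ on $\Pi_{\rm all}(c)$. It then feeds this floor into the dominance inequality, and that is where the $\delta^4$ and $N^3$ come from. You bypass the gap and bound the gradient directly by $\sum_i(1-p_i)\ge\mu(c)$. This gives $M_{\rm all}(c)\le 2N^2/(c^2\delta^2\mu(c)^2)$, which you then relax to the stated bound using $\delta\le1$ and $N\ge1$. You also check explicitly that the zero-gradient convention is never triggered.

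\textbf{What each route buys.} The paper's route is modular: any gap floor on a region, combined with gradient dominance, yields a noise certificate. Yours is more direct, avoids the path lemma, and gives sharper certificates for this class. You could sharpen further by keeping $\sum_i(K_i-1)c$ in place of $\mu(c)$.
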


% \begin{proof}[Proof sketch]
% Local dominance turns any positive product-policy gap into an optimal-arm gradient coordinate of magnitude at least $(c\delta/N)(J^\star-J(\pi))$. On $\Pi_{\rm all}(c)$, the gap is at least $\delta\mu(c)$, while $\mathcal V(\pi)\le2N$. The full proof is provided in the supplementary material.
% \end{proof}

A complete proof of this proposition is provided in the supplementary material. Note that Proposition~\ref{prop:local-dominance} covers dense, nonadditive reward tables. For example, if the normalized reward is $r(\ba)=N^{-1}\sum_i u_i(A_i)+\epsilon h(\ba)$, each $u_i$ has an optimal-arm gap of at least $\Delta$, and all one-coordinate changes in $h$ are bounded in absolute value by $B$, then $\Delta/N>|\epsilon|B$ yields a local-dominance margin $\delta=\Delta/N-|\epsilon|B$.
Thus, Corollary~\ref{cor:optimally-supported} and Proposition~\ref{prop:local-dominance} avoid the block trap through complementary mechanisms: joint reinforcement in sparse tables and persistent local advantage in dense tables. Figure~\ref{fig:main-experiments} illustrates both mechanisms, together with the trap that arises under oblivious feedback.

\section{Mechanism Diagnostics}

The diagnostics illustrate the theoretical mechanisms under the fully oblivious protocol rather than provide a comprehensive benchmark. The left panel of Figure~\ref{fig:main-experiments} shows a decreasing product-policy gap for the sparse coordination class and its invariant support geometry. Because the practical stepsize exceeds the conservative range in Corollary~\ref{cor:optimally-supported}, this trajectory diagnoses the mechanism rather than verifies the certified constants. The center panel shows the negative case: from uniform initialization, the optimal-profile probability remains small while the suboptimal block dominates, consistent with Proposition~\ref{prop:trap}. The right panel presents a dense, nonadditive AI-pipeline case with an analytic local-dominance certificate; its four modules observe only one terminal score. Supplementary diagnostics show that certificate estimates are associated with terminal gaps and that the trap basin expands as the suboptimal block reward approaches the optimum.

\section{Related Work}

\textbf{Multi-player and cooperative bandits.} Multi-player bandits study collisions, decentralized allocation, and limited communication \cite{anantharam1987asymptotically,liu2010distributed,anandkumar2011distributed,kalathil2014decentralized,rosenski2016multi,besson2018multi,boursier2019sic,bistritz2018distributed}. Cooperative and combinatorial bandits use communication, aggregation, centralization, or component feedback \cite{bistritz2020cooperative,gai2012combinatorial,chen2013combinatorial,kveton2015tight}. Unlike these settings, OC-MPMAB provides neither coordination signals nor awareness of other learners; we characterize the hidden product-policy geometry induced by independent updates driven solely by a shared scalar reward.

\textbf{Common-interest games.} Product-policy traps are related to common-payoff and potential games \cite{monderer1996potential,marden2009payoff}, learning automata and payoff-based dynamics \cite{narendra1989learning,leslie2005individual,mertikopoulos2016learning}, and independent multi-agent reinforcement learning coordination failures \cite{claus1998dynamics,lauer2000algorithm,littman1994markov,hu2003nash,shoham2007multiagent,busoniu2008comprehensive}. Unlike equilibrium-selection analyses with a known game form or prescribed dynamics, we characterize stationarity, traps, and finite-table certificates when learners observe neither joint actions, player identities, nor the game model.

\textbf{Policy gradients and gradient bandits.} Likelihood-ratio and policy-gradient identities are classical \cite{glynn1990likelihood,williams1992simple,sutton1999policy,baxter2001infinite,konda2003actor}; baselines and variance reduction are well studied \cite{greensmith2004variance,mei2022role}. Softmax policy-gradient theory mostly treats single-agent or centralized policies \cite{mei2020global,mei2020escaping,li2021softmax,agarwal2021theory,mei2023stochastic}. Unlike the single-player case, hidden product coupling can make the component $A_i^\star$ disadvantageous because $Q_i^\pi(A_i^\star)$ averages over the policies of other players. We characterize the resulting traps and verifiable conditions for global rates, rather than deriving another score-function identity.

\section{Conclusion}

This paper presents a theoretical analysis of OC-MPMAB, a minimal multi-player bandit model in which fully oblivious players observe only their selected arms and a shared reward. Our findings are threefold. First, independent gradient-bandit updates jointly form an unbiased stochastic gradient estimator of the hidden product-policy objective, which guarantees universal finite-time convergence to first-order stationary points. Second, first-order stationarity cannot generally be strengthened to global optimality: product-policy traps can induce suboptimal attractors, and a deterministic block trap can lead to failure with constant probability from the default initialization under sufficiently small diminishing stepsizes. Third, structural properties of the reward table can recover global optimality guarantees. Specifically, finite-table certificates for product-gradient dominance and self-bounding stochastic noise provide certificate-dependent bounds on the product-policy gap and pseudo-regret; uniform certificates further achieve an $O(1/t)$ gap and $O(\log T)$ pseudo-regret.

\bibliography{aaai2027}

% Check whether the conference requires a reproducibility checklist to be included in the paper.
% If so, you can uncomment the following line and ajust the path to include it.
% \input{ReproducibilityChecklist.tex}

\end{document}

% --- supplement: supplementary.tex ---

\maketitle

\appendix
\section{Omitted Proofs}
\label{app:proofs}

Unless redefined locally, all model, policy, and certificate notation follows the main text.

\subsection{Proof of Main-Text Proposition 1}

\begin{proof}
Fix an arbitrary player $i$ and arm $x\in\A_i$, and condition on $\F_t$. Because only the local factor $\pi_i(A_i)$ depends on $H_i(x)$,
\begin{align}
  \frac{\partial J(\HH)}{\partial H_i(x)}
  &=
  \sum_{\ba\in\A}r(\ba)
 \left(\prod_{j\ne i}\pi_j(A_j)\right)
  \frac{\partial\pi_i(A_i)}{\partial H_i(x)} .
\end{align}
Let $Z_i$ denote the softmax denominator. We then have
\begin{equation}
\pi_i(A_i)=\frac{\exp(H_i(A_i))}{Z_i},
\end{equation}
and the quotient rule yields
\begin{equation}
  \frac{\partial \pi_i(A_i)}{\partial H_i(x)}
  =
\pi_i(A_i)\bigl(\one\{A_i=x\}-\pi_i(x)\bigr).
\end{equation}
Thus
\begin{equation}
  \frac{\partial J(\HH)}{\partial H_i(x)}
  =
  \sum_{\ba\in\A}\pi_{\HH}(\ba)r(\ba)
  \bigl(\one\{A_i=x\}-\pi_i(x)\bigr).
\end{equation}
Meanwhile,
\begin{align}
\E[g_{i,t}(x)\mid\F_t]
&=
\E\!\left[
R_t\bigl(\one\{A_{i,t}=x\}-\pi_{i,t}(x)\bigr)
\right]\notag\\
&=
\sum_{\ba\in\A}\pi_{\HH_t}(\ba)r(\ba)
  \bigl(\one\{A_i=x\}-\pi_{i,t}(x)\bigr),
\end{align}
which is precisely the derivative evaluated at $\HH=\HH_t$.
\end{proof}

\subsection{Proof of Main-Text Theorem 1}

\begin{proof}
Let $\F_t$ denote the history before round $t$, and write
\begin{equation}
\mathbf g_t
=
\bigl(g_{i,t}(x): i\in[N], x\in\A_i\bigr)\in\R^d .
\end{equation}
Here $d=\sum_i K_i$, and we set $\Delta_1=J^\star-J(\HH_1)$.
Conditional on $\F_t$, Main-Text Proposition 1 gives
\begin{equation}
\E[\mathbf g_t\mid\F_t]
=
\nabla J(\HH_t).
\end{equation}
For each player, the centered score vector
\begin{equation}
s_{i,t}(x)=\one\{A_{i,t}=x\}-\pi_{i,t}(x)
\end{equation}
satisfies $\|s_{i,t}\|_2^2\le2$. Since $R_t\in[0,1]$,
\begin{equation}
\|\mathbf g_t\|_2^2
=
\sum_{i=1}^{N}R_t^2\|s_{i,t}\|_2^2
\le
2N
=
G^2.
\end{equation}
Applying Main-Text Lemma 1 to
$\HH_{t+1}=\HH_t+\gamma_t\mathbf g_t$ yields
\begin{equation}
J(\HH_{t+1})
\ge
J(\HH_t)+
\gamma_t\langle\nabla J(\HH_t),\mathbf g_t\rangle
-
\frac{L}{2}\gamma_t^2\|\mathbf g_t\|_2^2 .
\end{equation}
Taking conditional expectation yields
\begin{equation}
\E[J(\HH_{t+1})\mid\F_t]
\ge
J(\HH_t)+
\gamma_t\|\nabla J(\HH_t)\|_2^2
-
\frac{L}{2}\gamma_t^2G^2 .
\end{equation}
Rearranging, taking total expectations, and summing over $t=1,\ldots,T$ yields
\begin{align}
\sum_{t=1}^{T}
\gamma_t
\E\!\left[\|\nabla J(\HH_t)\|_2^2\right]
&\le
\E[J(\HH_{T+1})]-J(\HH_1)
\notag\\
&\quad+
\frac{LG^2}{2}\sum_{t=1}^{T}\gamma_t^2 .
\end{align}
Since $J(\HH)\le J^\star$, dividing by $\Gamma_T$ proves the weighted-average bound. The minimum bound follows because the minimum is no greater than the weighted average. For the constant stepsize $\gamma_t=\eta$, substitution gives
\begin{equation}
\frac{1}{T}\sum_{t=1}^{T}
\E\!\left[\|\nabla J(\HH_t)\|_2^2\right]
\le
\frac{\Delta_1}{\eta T}
+
\frac{L\eta G^2}{2}.
\end{equation}
When $\Delta_1>0$, the stated choice of $\eta$ balances the two terms. If $\Delta_1=0$, the full support of the softmax policy implies that every joint action has mean reward $J^\star$, so $\nabla J$ is identically zero. Finally, $G^2=2N$, $L=2d$, and $0\le\Delta_1\le1$.
\end{proof}

\subsection{Proof of Main-Text Proposition 2}

\begin{proof}
For the two-player, three-arm block-trap table in Main-Text Proposition 2, let
\begin{equation}
q_i=\pi_i(2),\qquad i=1,2 .
\end{equation}
The objective depends on the preferences only through $(q_1,q_2)$. Define its reduced form by
\begin{equation}
J_{\rm trap}(q_1,q_2):=J(\HH)
=0.9(1-q_1)(1-q_2)+q_1q_2 .
\end{equation}
The two non-optimal arms of each player form a reward-$0.9$ block. Thus, only the total probability $q_i$ assigned to the optimal arm matters in this two-dimensional reduction. For player 1, the conditional value of arm $2$ is $q_2$, whereas each arm in the non-optimal block has conditional value $0.9(1-q_2)$. Therefore, the product-gradient direction that increases the probability of the optimal arm of player 1 relative to the non-optimal block has the sign of
\begin{equation}
q_2-0.9(1-q_2)=1.9q_2-0.9 .
\end{equation}
Similarly, the corresponding direction for player 2 has the sign of $1.9q_1-0.9$. Hence, the separating interior stationary point in the reduced $(q_1,q_2)$ coordinates is
\begin{equation}
q_1=q_2=\frac{0.9}{1.9}=\frac{9}{19}.
\end{equation}
In the full product-softmax parameterization, this point corresponds to a stationary set because the reward table is invariant to how each player distributes the remaining probability $1-q_i$ between arms $0$ and $1$.
At any policy in this stationary set,
\begin{equation}
J
=
0.9\left(\frac{10}{19}\right)^2
+
\left(\frac{9}{19}\right)^2
=
\frac{171}{361}
<1
=J^\star .
\end{equation}
Thus, product-gradient dominance fails for every $c\in(0,9/19]$.

For completeness, the exact policy-space flow makes the boundary behavior explicit. Under $\dot{\HH}=\nabla J(\HH)$, the softmax chain rule yields, for $j\neq i$,
\begin{equation}
\dot q_i
=
q_i^2
\left((1-q_i)^2+\pi_i(0)^2+\pi_i(1)^2\right)
\left(1.9q_j-0.9\right).
\end{equation}
Thus, if $q_1<9/19$ and $q_2<9/19$, both optimal-arm probabilities decrease, and the lower-left region is forward invariant. Each $q_i$ therefore has a limit. If either limit were positive, the displayed derivative would remain bounded above by a negative constant because the other probability never increases, yielding a contradiction. Hence, $q_1,q_2\to0$. The default initialization satisfies $q_1=q_2=1/3<9/19$ and therefore lies in this attracting basin.
\end{proof}

\subsection{Proof of Main-Text Proposition 3}

\begin{proof}
Supplementary Proposition~\ref{prop:trap-minor} establishes an open attracting basin for a strict trap minor, while Supplementary Proposition~\ref{prop:trap-minor-stochastic} establishes stochastic lock-in on compact subsets of that basin. Together, these results prove Main-Text Proposition 3.
\end{proof}

\subsection{Proof of the Main-Text Default-Initialization Block-Trap Statement}

\begin{proof}
The probability bound under default initialization and the corresponding asymptotic lower bound on the expected gap follow from Supplementary Proposition~\ref{prop:stochastic-trap}.
\end{proof}

\subsection{Proof of Main-Text Theorem 2}

\begin{proof}
By Main-Text Lemma 2, $\pi_{\HH_t}\in\Pi_{\rm all}(c_T)$ for all $t\le T+1$, and hence $\pi_{\HH_t}\in\Pi_\star(c_T)$. When $D_t>0$, the definition of $\lambda_T=\lambda_\star(c_T)$ gives
\begin{equation}
\|\nabla J(\HH_t)\|_2^2
=
\|\mathcal G(\pi_{\HH_t})\|_2^2
\ge
\lambda_TD_t^2 .
\end{equation}
The second-moment envelope, together with $M_T=M_{\rm all}(c_T)$, gives
\begin{equation}
\E[\|\mathbf g_t\|_2^2\mid\F_t]
\le
M_T\|\nabla J(\HH_t)\|_2^2 .
\end{equation}
Applying Main-Text Lemma 1 and using the conditional unbiasedness from Main-Text Proposition 1 and the definition of $\rho_t$, we obtain
\begin{align}
\E[J(\HH_{t+1})-J(\HH_t)\mid\F_t]
&\ge
\gamma_t\|\nabla J(\HH_t)\|_2^2
\notag\\
&\quad
-\frac{L}{2}\gamma_t^2
\E[\|\mathbf g_t\|_2^2\mid\F_t]\notag\\
&\ge
\rho_t\|\nabla J(\HH_t)\|_2^2
\ge
\rho_t\lambda_TD_t^2 .
\end{align}
Thus
\begin{equation}
\E[D_{t+1}\mid\F_t]
\le
D_t-\rho_t\lambda_TD_t^2 .
\end{equation}
Taking expectations and applying Jensen's inequality gives
\begin{equation}
d_{t+1}\le d_t-\rho_t\lambda_Td_t^2 .
\end{equation}
If $d_{t+1}=0$, the reciprocal bound follows immediately. Otherwise, the sequence $\{d_t\}$ is nonincreasing, so
\begin{equation}
\frac1{d_{t+1}}-\frac1{d_t}
\ge
\frac{d_t-d_{t+1}}{d_t^2}
\ge
\rho_t\lambda_T .
\end{equation}
Because the zero-preference initialization is deterministic, $d_1=D_1$. Summing from $1$ to $t-1$ proves the gap bound. For the pseudo-regret,
\begin{equation}
\E[J^\star-r(\ba_t)\mid\F_t]
=
J^\star-J(\pi_{\HH_t})
=D_t.
\end{equation}
Taking expectations and summing yields
\begin{equation}
\E[\operatorname{Reg}(T)]=\sum_{t=1}^T d_t,
\end{equation}
and substituting the gap bound proves the displayed pseudo-regret bound.
\end{proof}

\subsection{Proof of Main-Text Corollary 1}

\begin{proof}
Because rewards lie in $[0,1]$, the condition $r(\ba)=0$ implies that $R_t=0$ almost surely whenever $\ba_t\neq\ba^\star$. No preference changes in such rounds. If $\ba_t=\ba^\star$, then, for any $x\neq A_i^\star$,
\begin{equation}
g_{i,t}(A_i^\star)-g_{i,t}(x)
=
R_t\bigl(1-\pi_{i,t}(A_i^\star)+\pi_{i,t}(x)\bigr)
\ge0 .
\end{equation}
Thus, $H_{i,t}(A_i^\star)-H_{i,t}(x)$ never decreases from its initial value $0$, which implies $\pi_{i,t}(A_i^\star)\ge1/K_i$.

Fix any $\pi\in\Pi_\star(c_\star)$, and let
\begin{equation}
p_i=\pi_i(A_i^\star),
\qquad
p_\star=\prod_{i=1}^{N}p_i .
\end{equation}
Then $J(\pi)=J^\star p_\star$ and, writing $D(\pi)=J^\star-J(\pi)$, $D(\pi)=J^\star(1-p_\star)$. The optimal-arm gradient coordinate is
\begin{equation}
\mathcal G_{i,A_i^\star}(\pi)
=
J^\star p_\star(1-p_i).
\end{equation}
Since $1-p_\star\le\sum_i(1-p_i)$, there exists a player such that
\begin{equation}
1-p_i\ge \frac{D(\pi)}{J^\star N}.
\end{equation}
Using $p_\star\ge c_\star^N$,
\begin{equation}
\|\mathcal G(\pi)\|_2
\ge
J^\star p_\star(1-p_i)
\ge
\frac{c_\star^N}{N}D(\pi),
\end{equation}
which proves the lower bound on $\lambda_\star(c_\star)$.

For the noise certificate on $\mathcal S_{\rm sp}$,
\begin{equation}
\mathcal G_i(\pi)
=
J^\star p_\star(\mathbf e_{A_i^\star}-\pi_i),
\end{equation}
so
\begin{equation}
\|\mathcal G(\pi)\|_2^2
=
(J^\star)^2p_\star^2
\sum_i\|\mathbf e_{A_i^\star}-\pi_i\|_2^2 .
\end{equation}
The same sparse reward structure gives
\begin{equation}
\mathcal V(\pi)
=
J^\star p_\star
\sum_i\|\mathbf e_{A_i^\star}-\pi_i\|_2^2 .
\end{equation}
Whenever $\|\mathcal G(\pi)\|_2>0$,
\begin{equation}
\frac{\mathcal V(\pi)}
{\|\mathcal G(\pi)\|_2^2}
=
\frac{1}{J^\star p_\star}
\le
\frac{1}{J^\star c_\star^N}.
\end{equation}
If $\mathcal G(\pi)=0$ on $\mathcal S_{\rm sp}$, the displayed gradient formula forces every $\pi_i$ to be concentrated on $A_i^\star$, and hence $\mathcal V(\pi)=0$. Thus the zero-gradient convention for $M_{\rm sp}$ is also satisfied.
The rate statement follows by applying the same one-step recursion as in Main-Text Theorem 2 within this invariant region, using constants that are independent of the horizon.
\end{proof}

\subsection{Proof of Main-Text Proposition 4}

\begin{proof}
Let $D(\pi)=J^\star-J(\pi)$ and $q_i=1-\pi_i(A_i^\star)$. Because the rewards are normalized and $\ba^\star$ is optimal,
\begin{equation}
D(\pi)=J^\star-J(\pi)
\le
\Pr_{\ba\sim\pi}\{\ba\neq\ba^\star\}
\le
\sum_{i=1}^{N}q_i .
\end{equation}
Thus, some player $i$ satisfies $q_i\ge D(\pi)/N$. For this player,
\begin{equation}
J(\pi)=\sum_{x\in\A_i}\pi_i(x)Q_i^\pi(x),
\end{equation}
and the local-dominance condition implies
\begin{equation}
Q_i^\pi(A_i^\star)-J(\pi)
\ge
\delta q_i .
\end{equation}
If $\pi\in\Pi_\star(c)$, then
\begin{equation}
\mathcal G_{i,A_i^\star}(\pi)
=
\pi_i(A_i^\star)
\bigl(Q_i^\pi(A_i^\star)-J(\pi)\bigr)
\ge
c\delta q_i
\ge
\frac{c\delta}{N}D(\pi).
\end{equation}
The claim follows by taking the Euclidean norm.

It remains to bound the baseline-free noise ratio on $\Pi_{\rm all}(c)$. For any joint action $\ba\neq\ba^\star$, change each non-optimal coordinate of $\ba$ to its optimal arm one at a time. Each such change increases the reward by at least $\delta$. Hence, $J^\star-r(\ba)\ge\delta$ whenever $\ba\neq\ba^\star$. Therefore, for any product policy in $\Pi_{\rm all}(c)$,
\begin{equation}
D(\pi)=J^\star-J(\pi)
\ge
\delta\Pr_{\ba\sim\pi}\{\ba\neq\ba^\star\}.
\end{equation}
Since $\pi_i(x)\ge c$ for every local arm,
\begin{equation}
\begin{aligned}
\Pr_{\ba\sim\pi}\{\ba\neq\ba^\star\}
&=
1-\prod_i\pi_i(A_i^\star)\\
&\ge
1-\prod_i\bigl(1-(K_i-1)c\bigr)
=\mu(c).
\end{aligned}
\end{equation}
Thus, $D(\pi)\ge\delta\mu(c)$ on $\Pi_{\rm all}(c)$. Combining this lower bound with the product-gradient dominance bound gives
\begin{equation}
\|\mathcal G(\pi)\|_2^2
\ge
\left(\frac{c\delta}{N}\right)^2D(\pi)^2
\ge
\frac{c^2\delta^4\mu(c)^2}{N^2}.
\end{equation}
Finally, because $R_t\in[0,1]$ and
$\|\mathbf e_{A_i}-\pi_i\|_2^2\le2$,
\begin{equation}
\mathcal V(\pi)
\le
2N .
\end{equation}
The displayed bound on $M_{\rm all}(c)$ follows.
\end{proof}

\section{Supplementary Theoretical Results}
\label{app:supp-results}

This section presents additional theoretical results that strengthen or extend the main-text analysis.

\subsection{Constant-Probability Stochastic Failure for the Block Trap}
\label{app:stochastic-trap}

Main-Text Proposition 2 establishes a deterministic product-gradient basin for the block-trap table. The next result provides a stochastic counterpart under the same zero-preference initialization: with a standard diminishing stepsize schedule and a sufficiently small initial scale, the failure probability of the baseline-free stochastic process is bounded below by a positive constant that is independent of the horizon. This result complements the finite-horizon constant-stepsize simulations, which illustrate the basin geometry but do not constitute an almost-sure convergence theorem.

\begin{supplemma}[Policy-space lock-in]
\label{lem:policy-lockin}
Let $(\F_t)$ be a filtration and let $X_t$ be an $\F_t$-adapted stochastic approximation on a compact set $\mathcal X\subset\R^m$:
\begin{equation}
X_{t+1}
=
X_t+\gamma_t\bigl(F(X_t)+U_{t+1}\bigr)+\xi_{t+1},
\end{equation}
where $F$ is Lipschitz, $U_{t+1}$ is $\F_{t+1}$-measurable with $\E[U_{t+1}\mid\F_t]=0$ and $\|U_{t+1}\|_2\le B$, and the $\F_{t+1}$-measurable remainder satisfies $\|\xi_{t+1}\|_2\le B_r\gamma_t^2$. Assume that $\{\gamma_t\}$ is deterministic and nonnegative, $\sum_t\gamma_t=\infty$, and $\sum_t\gamma_t^2<\infty$. If $\mathcal T$ is a compact asymptotically stable attractor of $\dot X=F(X)$ and $K$ is a compact subset of its basin, then constants $a_K,b_K>0$ and $C_K<\infty$ exist such that
\begin{equation}
\Pr\{\operatorname{dist}(X_t,\mathcal T)\to0\}
\ge
\left[1-C_K\sum_{t=1}^{\infty}\gamma_t^2\right]_+
\end{equation}
where $[z]_+=\max\{z,0\}$. This bound holds whenever $X_1\in K$, $\max_t\gamma_t\le a_K$, and $\sum_t\gamma_t^2\le b_K$.
\end{supplemma}

\begin{proof}
Let $\Phi_s$ denote the flow of $\dot X=F(X)$. Asymptotic stability provides nested neighborhoods
\begin{equation}
\mathcal T\subset U_0,
\qquad
\overline U_0\subset U_1,
\qquad
\overline U_1\subset U_2,
\qquad
\overline U_2\subset\mathcal B(\mathcal T),
\end{equation}
where $U_2$ can be chosen as a positively invariant attracting neighborhood of $\mathcal T$. Compactness of $K$ and attraction imply that there is a finite $s_K$ such that $\Phi_{s_K}(K)\subset U_0$. The flow tube
\begin{equation}
\mathcal C_K=\{\Phi_s(x):x\in K,\ 0\le s\le s_K\}
\end{equation}
is compact and lies inside the basin. Hence, the tube and the nested neighborhood boundaries have a positive robustness margin $\varepsilon_K$.

Consider the affine interpolation of the recursion in ODE time. On every interval of bounded ODE length, the Lipschitz property and Gronwall's inequality bound its distance from the corresponding flow segment by a constant times the oscillation of the cumulative perturbation on that interval, plus the Euler and remainder errors. The total nonmartingale error is at most
\begin{equation}
C_F\sum_t\gamma_t^2+B_r\sum_t\gamma_t^2.
\end{equation}
After reducing $a_K$ and $b_K$, this quantity is below $\varepsilon_K/4$. Thus, if every martingale increment over a flow-time window is below $\varepsilon_K/2$, the interpolation follows $\mathcal C_K$ into $U_1$ and remains in $U_2$.

Let
\begin{equation}
\mathsf M_n=\sum_{t=1}^{n}\gamma_tU_{t+1}.
\end{equation}
The martingale-difference property, boundedness, and $\sum_t\gamma_t^2<\infty$ imply that $\mathsf M_n$ converges almost surely and in $L^2$ to a limit $\mathsf M_\infty$, with
\begin{equation}
\E\|\mathsf M_\infty\|_2^2
\le
B^2\sum_{t=1}^{\infty}\gamma_t^2.
\end{equation}
Doob's maximal inequality therefore yields
\begin{equation}
\Pr\!\left\{\sup_{n\ge1}\|\mathsf M_n\|_2>\frac{\varepsilon_K}{4}\right\}
\le
\frac{64B^2}{\varepsilon_K^2}
\sum_{t=1}^{\infty}\gamma_t^2.
\end{equation}
On the complementary event, every martingale increment satisfies
\begin{equation}
\|\mathsf M_n-\mathsf M_m\|_2
\le \frac{\varepsilon_K}{2},
\qquad m,n\ge0,
\end{equation}
so the preceding robustness argument applies to every flow-time window. Moreover, the martingale tail oscillations vanish because $\mathsf M_n$ converges, and the remainder and Euler errors are summable. The interpolation is therefore an asymptotic pseudotrajectory that enters and remains in $U_2$. The standard attractor result for asymptotic pseudotrajectories then gives $\operatorname{dist}(X_t,\mathcal T)\to0$ \cite{benaim1999dynamics}. Absorbing the displayed maximal-inequality coefficient into $C_K$ proves the claim.
\end{proof}

\begin{supproposition}[Constant-probability stochastic trapping]
\label{prop:stochastic-trap}
Consider the baseline-free independent gradient-bandit update on the block-trap table from Main-Text Proposition 2, initialized with zero preferences and using deterministic rewards. Let
\begin{equation}
\gamma_t=\frac{\alpha}{t+t_0},
\qquad \alpha>0,
\end{equation}
where $t_0$ is sufficiently large. Then, there exists a constant $p_{\rm trap}>0$, independent of the horizon, such that, with probability at least $p_{\rm trap}$,
\begin{equation}
q_{i,t}=\pi_{i,t}(2)<\frac{9}{19}
\quad\text{for all }t,
\qquad
q_{i,t}\to0
\quad\text{for }i=1,2 .
\end{equation}
On this event, the optimal-profile probability $\pi_{1,t}(2)\pi_{2,t}(2)$ converges to $0$, the objective converges to the suboptimal block value $0.9$, and the product-policy gap converges to $0.1$. Denote this gap by $D_t=1-J_{\rm trap}(q_{1,t},q_{2,t})$. More precisely, there exists a finite constant $C$, depending only on the table and the zero-preference initialization, such that, for every fixed $\alpha>0$ and all sufficiently large $t_0$,
\begin{equation}
p_{\rm trap}
\ge
\left[1-\frac{C\alpha^2}{t_0}\right]_+ .
\end{equation}
In particular, increasing $t_0$ if necessary gives $p_{\rm trap}\ge1/2$.
The default initialization also fails under the natural schedule $\gamma_t=\alpha/(t+1)$: there exist constants $\alpha_0>0$ and $C_0<\infty$, depending only on the block-trap table and chosen such that $C_0\alpha_0^2\le1/2$, for which every $\alpha\in(0,\alpha_0]$ satisfies
\begin{equation}
\Pr\{q_{i,t}\to0\text{ for }i=1,2\}
\ge
\left[1-C_0\alpha^2\right]_+ .
\end{equation}
Consequently,
\begin{equation}
\liminf_{t\to\infty}\E[D_t]
\ge
0.1\left[1-C_0\alpha^2\right]_+ .
\end{equation}
\end{supproposition}

\begin{proof}
Let $X_t=(\pi_{1,t},\pi_{2,t})$ denote the product policy. The exact preference update induces the following multiplicative policy recursion:
\begin{equation}
\pi_{i,t+1}(x)
=
\frac{\pi_{i,t}(x)\exp(\gamma_tg_{i,t}(x))}
{\sum_y\pi_{i,t}(y)\exp(\gamma_tg_{i,t}(y))}.
\end{equation}
Because $|g_{i,t}(x)|\le1$, a uniform Taylor expansion on the compact product simplex yields
\begin{equation}
X_{t+1}
=
X_t+\gamma_t\bigl(F(X_t)+U_{t+1}\bigr)+\xi_{t+1},
\end{equation}
where $U_{t+1}$ is a bounded martingale difference, $\|\xi_{t+1}\|_2\le B_r\gamma_t^2$ for a table-dependent constant $B_r$, and the deterministic policy flow induced by $\dot{\HH}=\nabla J(\HH)$ has coordinates
\begin{equation}
F_{i,x}(\pi)
=
\pi_i(x)\left(
\mathcal G_{i,x}(\pi)
-\sum_y\pi_i(y)\mathcal G_{i,y}(\pi)
\right).
\end{equation}
The field $F$ is polynomial in the policy coordinates and therefore Lipschitz on the compact product simplex.

Let
\begin{equation}
\mathcal T
=
\{(\pi_1,\pi_2):\pi_1(2)=\pi_2(2)=0\}.
\end{equation}
The exact reduced policy flow derived above is
\begin{equation}
\begin{aligned}
\dot q_i
&=
q_i^2
\left((1-q_i)^2+\pi_i(0)^2+\pi_i(1)^2\right)\\
&\quad{}\times
\left(1.9q_j-0.9\right),
\qquad j\neq i.
\end{aligned}
\end{equation}
Consequently, $\mathcal T$ is a compact asymptotically stable attractor, and the open set $q_1,q_2<9/19$ is contained in its basin. In particular, the singleton consisting of the zero-preference policy $q_1=q_2=1/3$ is a compact subset of this basin.

For $\gamma_t=\alpha/(t+t_0)$,
\begin{equation}
\max_t\gamma_t\le\frac{\alpha}{t_0},
\qquad
\sum_{t=1}^{\infty}\gamma_t^2
\le
\frac{\alpha^2}{t_0}.
\end{equation}
Supplementary Lemma~\ref{lem:policy-lockin} therefore gives
\begin{equation}
\Pr\{q_{1,t},q_{2,t}\to0\}
\ge
\left[1-\frac{C\alpha^2}{t_0}\right]_+
\end{equation}
for all sufficiently large $t_0$. On the robust event used by the lemma, the interpolation remains in $q_1,q_2<9/19$, so these inequalities hold at every iterate. For the natural schedule $\gamma_t=\alpha/(t+1)$, we instead have $\max_t\gamma_t\le\alpha/2$ and $\sum_t\gamma_t^2\le\alpha^2$. The same lemma yields constants $\alpha_0>0$ and $C_0<\infty$ such that
\begin{equation}
\Pr\{q_{1,t},q_{2,t}\to0\}
\ge
\left[1-C_0\alpha^2\right]_+
\ge\frac12
\end{equation}
for every $\alpha\in(0,\alpha_0]$, after reducing $\alpha_0$ if necessary. On this event, $D_t\to0.1$. Since $D_t\ge0$, Fatou's lemma yields
\begin{equation}
\liminf_{t\to\infty}\E[D_t]
\ge
\E[\liminf_{t\to\infty}D_t]
\ge
0.1\left[1-C_0\alpha^2\right]_+ .
\end{equation}
\end{proof}

\paragraph{Finite-horizon witness.}
The preceding proposition concerns asymptotic behavior and uses diminishing stepsizes. A simpler explicit finite-horizon event is also available for any fixed horizon and constant stepsize. Fix $T\ge1$ and $\eta>0$, and run the baseline-free independent gradient-bandit update on the same block-trap table from zero preferences.
Let $E_T$ denote the event that both players sample arm $0$ in every round $t\le T$. This event lies within the suboptimal block. Conditional on $E_{t-1}$, symmetry ensures that the preferences of arms $1$ and $2$ remain equal for each player, while the preference of arm $0$ is no smaller than either of them. Hence, on each prefix of the event, every player samples arm $0$ with probability at least $1/3$ in every round. Therefore,
\begin{equation}
\Pr(E_T)\ge (1/9)^T .
\end{equation}

On $E_T$, the equality between the preferences of arms $1$ and $2$ implies
$\pi_{i,t}(1)=\pi_{i,t}(2)$ for every $i$ and $t\le T+1$. Because arm $0$ is sampled repeatedly and receives the positive block reward, its preference never falls below the common preference of arms $1$ and $2$. Thus, $\pi_{i,T+1}(2)\le1/3$. The optimal-profile probability is the product of the two optimal-arm probabilities and is therefore at most $1/9$. Finally, for any $q_1,q_2\le1/3$,
\begin{equation}
J_{\rm trap}(q_1,q_2)
=0.9(1-q_1)(1-q_2)+q_1q_2
\le0.9,
\end{equation}
and hence $J^\star-J_{\rm trap}(q_1,q_2)\ge0.1$.

\subsection{Strict Trap Structure Beyond One Matrix}

The following results establish the trap-minor obstruction in Main-Text Proposition 3 beyond a single numerical table. The classification characterizes strict traps, the embedding result extends them to larger tables, and the lock-in result transfers deterministic attraction to the sampled update. These statements are intentionally restricted to strict traps: degenerate ties can create neutral continua, whereas the separate block-trap result above addresses the tied example under default initialization.

\begin{supproposition}[Complete strict-trap classification for $2\times2$ tables]
\label{prop:two-by-two-classification}
Consider any deterministic two-player, two-arm common-reward table
\begin{equation}
\begin{array}{c|cc}
 & 0 & 1 \\ \hline
0 & a & b\\
1 & c & d
\end{array}.
\end{equation}
Under deterministic product-gradient flow, a strict, attracting pure-product trap with a suboptimal value exists if and only if the table contains a suboptimal strict coordinate-wise local maximum. Equivalently, at least one of the following strict conditions holds:
\begin{equation}
\begin{array}{ll}
a>\max\{b,c\}\ \text{and}\ a<\max\{a,b,c,d\},\\
b>\max\{a,d\}\ \text{and}\ b<\max\{a,b,c,d\},\\
c>\max\{a,d\}\ \text{and}\ c<\max\{a,b,c,d\},\\
d>\max\{b,c\}\ \text{and}\ d<\max\{a,b,c,d\}.
\end{array}
\end{equation}
\end{supproposition}

\begin{proof}
Let $p=\pi_1(0)$ and $q=\pi_2(0)$. The product objective is
\begin{equation}
J(p,q)=pqa+p(1-q)b+(1-p)qc+(1-p)(1-q)d .
\end{equation}
The sign of $\dot p$ equals the sign of the local advantage of arm $0$ over arm $1$ for player 1,
\begin{equation}
F_1(q)=q(a-c)+(1-q)(b-d),
\end{equation}
and the sign of $\dot q$ equals the sign of the local advantage of arm $0$ over arm $1$ for player 2,
\begin{equation}
F_2(p)=p(a-b)+(1-p)(c-d).
\end{equation}
Thus, the corner $(p,q)=(1,1)$, which corresponds to payoff $a$, is strictly attracting if and only if $F_1(1)>0$ and $F_2(1)>0$, namely, if $a>c$ and $a>b$. It is a spurious trap exactly when $a$ is also below the global table maximum. Applying the same inward-sign calculation at the other three corners yields the four displayed alternatives.

It remains to rule out other strictly attracting product-policy equilibria. If $a-b-c+d\neq0$, any interior stationary point is the intersection of the two affine indifference lines. The off-diagonal entries in the linearization of the $(p,q)$ flow have a strictly positive product, so the point is a saddle. If $a-b-c+d=0$, the indifference conditions are constant or define non-isolated neutral sets. Hence, no additional strictly attracting interior trap exists. A boundary equilibrium that is not a strict corner requires a tie in the local advantage of one player and is therefore not strictly attracting. This proves the equivalence.
\end{proof}

\begin{supproposition}[Trap-minor embedding for finite-arm tables]
\label{prop:trap-minor}
Consider any deterministic two-player common-reward table with finite arm sets. Suppose that there are arms $x_0,x_1\in\A_1$ and $y_0,y_1\in\A_2$ such that the $2\times2$ minor on these arms has a strict suboptimal trap at $(x_0,y_0)$:
\begin{equation}
\begin{aligned}
r(x_0,y_0)&>r(x_1,y_0),\\
r(x_0,y_0)&>r(x_0,y_1),\\
r(x_0,y_0)&<J^\star .
\end{aligned}
\end{equation}
Assume further that the trap corner is strict with respect to all outside unilateral deviations,
\begin{equation}
\begin{aligned}
r(x_0,y_0)&>r(x,y_0)\quad \forall x\neq x_0,\\
r(x_0,y_0)&>r(x_0,y)\quad \forall y\neq y_0 .
\end{aligned}
\end{equation}
Then, $(x_0,y_0)$ is a strict, attracting pure-product trap for the full product-gradient flow. In particular, there exists a neighborhood of product policies concentrated on $(x_0,y_0)$ whose deterministic flow moves toward that suboptimal corner.
\end{supproposition}

\begin{proof}
For player 1, when player 2 assigns probability close to one to $y_0$, every alternative arm $x\neq x_0$ has a conditional value strictly below that of $x_0$. This follows because the displayed inequalities are strict and the conditional values vary continuously with the opponent policy. Thus, there exists a neighborhood of $y_0$ in which all preference differences $H_1(x_0)-H_1(x)$ have positive drift. The same argument applies to player 2 around $x_0$. Therefore, a product-policy neighborhood of the corner has inward drift for every unilateral deviation away from $(x_0,y_0)$. Since $r(x_0,y_0)<J^\star$, the attracting corner is suboptimal. The $2\times2$ minor conditions identify the local coordination-trap pattern, while the outside-deviation inequalities ensure that the minor remains attracting in the full table.
\end{proof}

\begin{supproposition}[Stochastic lock-in for strict trap minors]
\label{prop:trap-minor-stochastic}
Consider a deterministic finite two-player common-reward table satisfying the trap-minor conditions of Supplementary Proposition~\ref{prop:trap-minor}. Let $\mathcal B$ denote the basin of the resulting suboptimal pure-product attractor. Fix $\alpha>0$, and run the baseline-free independent gradient-bandit update with stepsizes
\begin{equation}
\gamma_t=\frac{\alpha}{t+t_0}.
\end{equation}
For every compact set $K\subset\mathcal B$, there exist constants $C_K<\infty$ and $t_K<\infty$, where $t_K$ may depend on $\alpha$, such that the following statement holds: for every finite-preference initialization whose induced product policy lies in $K$ and every $t_0\ge t_K$, the stochastic product policies converge to the suboptimal trap with probability at least
\begin{equation}
\left[1-\frac{C_K\alpha^2}{t_0}\right]_+ .
\end{equation}
Let
\begin{equation}
\Delta_{\rm trap}=J^\star-r(x_0,y_0)>0.
\end{equation}
Then the expected product-policy gap also satisfies
\begin{equation}
\liminf_{t\to\infty}\E[J^\star-J(\pi_t)]
\ge
\Delta_{\rm trap}
\left[1-\frac{C_K\alpha^2}{t_0}\right]_+ .
\end{equation}
\end{supproposition}

\begin{proof}
The strict unilateral inequalities in Supplementary Proposition~\ref{prop:trap-minor} make the trap corner an asymptotically stable attractor of the deterministic policy flow. The multiplicative softmax expansion used in the proof of Supplementary Proposition~\ref{prop:stochastic-trap} expresses the stochastic product-policy recursion in the form required by Supplementary Lemma~\ref{lem:policy-lockin}, with a bounded martingale difference and a uniform $O(\gamma_t^2)$ remainder. Finally,
\begin{equation}
\max_t\gamma_t\le\frac{\alpha}{t_0},
\qquad
\sum_{t=1}^{\infty}\gamma_t^2\le\frac{\alpha^2}{t_0}.
\end{equation}
Applying the lemma to $K\subset\mathcal B$ proves the stated bound after increasing $t_K$, if necessary.
On the lock-in event, $J(\pi_t)\to r(x_0,y_0)$, whereas $J^\star-J(\pi_t)\ge0$ on every sample path. Fatou's lemma therefore gives the displayed expected-gap lower bound.
\end{proof}

\begin{supproposition}[A block-trap family]
\label{prop:block-trap-family}
Let each player $i\in\{1,2\}$ have one optimal arm $o_i$ and a nonempty block $B_i$ of $m_i$ non-optimal arms. Consider the deterministic rewards
\begin{equation}
\begin{aligned}
r(o_1,o_2)&=1,\\
r(b_1,b_2)&=\rho,\\
r(o_1,b_2)&=r(b_1,o_2)=0
\end{aligned}
\end{equation}
for all $b_i\in B_i$, where $\rho\in(0,1)$. If
\begin{equation}
\frac{1}{m_i+1}<\frac{\rho}{1+\rho}
\quad\text{for }i=1,2,
\end{equation}
then the default uniform initialization lies in an invariant bad basin of the deterministic product-gradient flow. Within this basin, the flow moves both optimal-arm probabilities toward zero. Moreover, under the same diminishing-stepsize conditions as Supplementary Proposition~\ref{prop:stochastic-trap}, the baseline-free independent gradient-bandit dynamics have a horizon-independent constant lower bound on the probability of convergence to the suboptimal block $B_1\times B_2$.
\end{supproposition}

\begin{proof}
Let $q_i=\pi_i(o_i)$. For every table in this family, define the reduced objective by
\begin{equation}
J_\rho(q_1,q_2):=J(\HH)
=\rho(1-q_1)(1-q_2)+q_1q_2 .
\end{equation}
The conditional value of $o_1$ is $q_2$, whereas every arm in $B_1$ has conditional value $\rho(1-q_2)$. Hence, the direction that increases the probability of the optimal arm of player 1 relative to the block has the sign of
\begin{equation}
q_2-\rho(1-q_2)=(1+\rho)q_2-\rho .
\end{equation}
The expression for player 2 is symmetric. Therefore, the separator is
\begin{equation}
q_1=q_2=\frac{\rho}{1+\rho}.
\end{equation}
The displayed condition states precisely that the uniform initialization $q_i=1/(m_i+1)$ lies below this separator for both players. Below the separator, both optimal-arm directions are negative; thus, the lower-left region is invariant and attracts the flow toward $q_1=q_2=0$. The stochastic statement follows by applying Supplementary Lemma~\ref{lem:policy-lockin} to this attracting block face.
\end{proof}

\paragraph{Where single-player and game-learning intuitions fail.}
The analysis of a single-player stochastic gradient bandit relies on the fact that the advantage of an optimal arm is determined by that arm relative to the current scalar objective. In particular, the conditional value of the optimal arm cannot deteriorate merely because another learner has not yet coordinated with it. In an \CRMPMAB{}, however, player $i$ compares $Q_i^\pi(A_i^\star)$ with $J(\pi)$, and $Q_i^\pi(A_i^\star)$ averages over the current product policy of the other players. In the block trap,
\begin{equation}
Q_1^\pi(2)-Q_1^\pi(0)=q_2-0.9(1-q_2)=1.9q_2-0.9 .
\end{equation}
At uniform initialization, this quantity is negative. Thus, the same softmax gradient mechanism that drives a single player toward a globally best arm can drive an oblivious player away from its component of the globally optimal joint action when the other player has not yet assigned sufficient probability to the matching component. This is precisely the product-policy obstruction ruled out by the certificate $\lambda_\star(c)>0$.

This setting also differs from standard common-interest game dynamics. A common-interest game typically provides each learner with a payoff function indexed by a known action profile or analyzes a specified payoff-based revision protocol over a known game form. In an \CRMPMAB{}, a player observes neither the action profile nor the game form; the player observes only a scalar sample after choosing a local arm. The product-policy objective and the quantities $Q_i^\pi(x)$ are therefore analytical objects rather than data available to the learner. The framework in the main text determines which conclusions survive this loss of information: unbiased stationarity survives, global optimality does not, and certificates identify reward tables for which the missing information is not harmful.

\subsection{Dense Additive-Interaction Extension}
\label{app:local-dominance}

\begin{suppcorollary}[Additive rewards with bounded interactions]
\label{cor:additive-perturbation}
Suppose that each player has an optimal local arm $A_i^\star$, and
\begin{equation}
r(\ba)=\frac{1}{N}\sum_{i=1}^{N}u_i(A_i)+\epsilon h(\ba),
\qquad
0\le r(\ba)\le1,
\end{equation}
where $u_i(A_i^\star)-u_i(x)\ge\Delta$ for every $i$ and $x\neq A_i^\star$, and
$|h(\ba)-h(\ba')|\le B$ whenever $\ba$ and $\ba'$ differ in one coordinate. If
\begin{equation}
\delta=\frac{\Delta}{N}-|\epsilon|B>0,
\end{equation}
then the local-dominance condition holds with this value of $\delta$. Consequently,
\begin{equation}
\lambda_\star(c)\ge\left(\frac{c}{N}\left(\frac{\Delta}{N}-|\epsilon|B\right)\right)^2 .
\end{equation}
Moreover, the explicit bound on $M_{\rm all}(c)$ in Main-Text Proposition 4 also applies with this value of $\delta$.
\end{suppcorollary}

\begin{proof}
For any player $i$, any $x\neq A_i^\star$, and any fixed $\ba_{-i}$,
\begin{equation}
\begin{aligned}
r(A_i^\star,\ba_{-i})-r(x,\ba_{-i})
&=
\frac{u_i(A_i^\star)-u_i(x)}{N}\\
&\quad+
\epsilon\bigl(h(A_i^\star,\ba_{-i})-h(x,\ba_{-i})\bigr)\\
&\ge
\frac{\Delta}{N}-|\epsilon|B
=\delta .
\end{aligned}
\end{equation}
The result then follows from Main-Text Proposition 4.
\end{proof}

\section{Experiments}
\label{app:experiments}

\subsection{Experiment Setup}

The experiments examine the product-policy geometry identified by the theory. Unless otherwise stated, all experiments use the same baseline-free independent gradient-bandit update analyzed in the paper. The primary trajectories use $200$ runs and $5000$ steps, with $\eta=0.20$ for sparse coordination and the block trap and $\eta=0.15$ for the additive and AI-pipeline instances. When all arm sets have the same cardinality, we write $K=K_i$ for every player. The trajectory plots report means over independent runs with 95\% confidence bands computed as the mean $\pm1.96$ standard errors.

\paragraph{Implementation environment.}
The archived results were regenerated without a GPU on Ubuntu 24.04.4 LTS using two Intel Xeon Gold 5318Y CPUs, 251 GiB of memory, Python 3.13.9, and Matplotlib 3.10.6. The simulation code otherwise uses only the Python standard library, and the complete default suite takes approximately eight minutes on this machine.

\paragraph{Certificate computation.}
For a finite reward table and a chosen optimal joint action $\ba^\star$, the structural certificates are offline properties of the product-policy landscape. First, choose a support threshold $c$. The deterministic no-trap ratio is evaluated over $\Pi_\star(c)$, whereas the conservative baseline-free noise ratio is evaluated over $\Pi_{\rm all}(c)$, unless a sharper invariant region is known. For each product policy, compute $J(\pi)$, the local conditional values $Q_i^\pi(x)$, the product-policy gradient $\mathcal G(\pi)$, the gap $D(\pi)=J^\star-J(\pi)$, and the second-moment quantity $\mathcal V(\pi)$. The certificate values are
\begin{equation}
\inf_{D(\pi)>0}\frac{\|\mathcal G(\pi)\|_2^2}{D(\pi)^2},
\qquad
\sup_{\|\mathcal G(\pi)\|_2>0}
\frac{\mathcal V(\pi)}{\|\mathcal G(\pi)\|_2^2}
\end{equation}
over the selected region.
As in the main text, a finite noise certificate also requires $\mathcal V(\pi)=0$ at every zero-gradient policy in the selected region; its value is set to $0$ when no positive-gradient policy remains and $\mathcal V$ vanishes throughout the region.

We implement a fixed-depth checker for small binary-arm reward tables. After relabeling the local arms so that $\ba^\star=(0,\ldots,0)$, we parameterize each product policy by $p_i=\pi_i(0)$, partition $[0,1]^N$ into dyadic boxes $B$ at a prescribed depth, and intersect these boxes with the selected certificate region. On each box, rational interval arithmetic provides the certified enclosures
\begin{equation}
\begin{aligned}
\underline{D}(B)&\le D(\pi)\le\overline{D}(B),\\
\underline{G^2}(B)&\le\|\mathcal G(\pi)\|_2^2,\\
\mathcal V(\pi)&\le\overline{V}(B).
\end{aligned}
\end{equation}
For any box with $\overline{D}(B)>0$, the value
$\underline{G^2}(B)/\overline{D}(B)^2$ is a valid lower bound on the deterministic ratio. The implementation combines these interval values with a separately established analytic lower bound $\lambda_{\rm an}$; it does not infer the positivity of $\lambda_\star(c)$ from boxes that intersect the zero-gap set. For the noise ratio, every box that intersects $\Pi_{\rm all}(c)$ must satisfy $\underline{G^2}(B)>0$, in which case $\overline{V}(B)/\underline{G^2}(B)$ is a valid upper bound. If any such box intersects the zero-gradient set, the checker returns \textsc{unknown}. The current implementation neither adaptively subdivides unresolved boxes nor returns a trap witness.

The checker is executable because $J$, $Q_i^\pi(x)$, $\mathcal G$, and $\mathcal V$ are finite sums and products of rational reward entries and policy coordinates. It is therefore a hybrid analytic/interval procedure: the positivity of the deterministic certificate is established analytically, while fixed-depth rational intervals independently certify the all-arm noise bound. Every reported pair $(\underline{\lambda},\overline M)$ is sound; cases with insufficient interval resolution are reported as \textsc{unknown}. Analytic certificates remain preferable when available: Main-Text Corollary 1 certifies sparse coordination, and Main-Text Proposition 4 provides both certificates for every locally dominant dense table. The AI-pipeline routing instance below is dense and nonadditive but locally dominant with $\delta=0.11$. Grid and random-policy evaluations remain diagnostics rather than certificates.

Table~\ref{tab:verified-certificates} reports the results of the fixed-depth checker at depth four. Its dense interaction row uses the normalized nonadditive table
\begin{equation}
\begin{aligned}
r(\ba)
&=\frac{1}{10}\biggl(2+2\sum_{i=1}^3\one\{A_i=0\}\\
&\qquad+\one\{A_1=A_2=0\}\\
&\qquad+\one\{A_2=A_3=0\}\biggr),
\end{aligned}
\end{equation}
which is locally dominant with $\delta=0.2$. The displayed values are conservative certified bounds.

The horizon-dependent constants in Main-Text Theorem 2 should be applied through an offline accept/reject workflow rather than used as an adaptive online rule. First, choose a horizon $T$ and a candidate stepsize schedule, compute $c_T=e^{-2\Gamma_T}/K_{\max}$ or a sharper invariant region, and then obtain certified constants $(\lambda,M)$ on that region analytically or with the hybrid checker. The final step is to verify that $\gamma_t\le 1/(LM)$ for every $t\le T$. If this condition fails, reduce the stepsize or certify a sharper invariant region and repeat the procedure. Thus, the condition is circular only if the certificate is treated as unknown; under an offline check, it is a finite sufficient-condition test for a proposed run.

\begin{table}[tb]
\centering
\small
\begin{tabular}{@{}lccc@{}}
\toprule
Instance & Region $c$ & $\underline{\lambda}$ & $\overline M$ \\
\midrule
\shortstack[l]{Sparse\\$N=3,K=2$} & $0.25$ & $2.71{\times}10^{-5}$ & $148.76$ \\
\shortstack[l]{Dense additive\\$N=3,K=2$} & $0.25$ & $6.94{\times}10^{-3}$ & $147.70$ \\
\shortstack[l]{Dense interaction\\$N=3,K=2$} & $0.25$ & $2.78{\times}10^{-4}$ & $346.23$ \\
\bottomrule
\end{tabular}
\caption{Hybrid analytic and interval certificates for binary-arm tables. The checker combines analytic lower bounds for $\lambda_\star(c)$ with fixed-depth rational interval upper bounds for $M_{\rm all}(c)$.}
\label{tab:verified-certificates}
\end{table}

All three rows use the finite-horizon regions $\Pi_\star(0.25)$ and $\Pi_{\rm all}(0.25)$. For sparse coordination, the first value is the analytic bound $c^{2N}/N^2$ at $c=0.25$, and the second is the fixed-depth interval bound on $M_{\rm all}(0.25)$. These values differ from the invariant-region constants in Main-Text Corollary 1: for $N=3$, $K=2$, and $J^\star=1$, that corollary instead uses $c_\star=0.5$ and gives $\lambda_\star(c_\star)\ge1/576$ and $M_{\rm sp}\le8$. For the dense additive row, writing $p_i=\pi_i(0)$ gives $D(\pi)=\frac13\sum_i(1-p_i)$; selecting an index with $1-p_i\ge D(\pi)$ yields the instance-specific analytic bound $\lambda_\star(0.25)\ge0.25^2/3^2=1/144$. The dense interaction row uses the local-dominance bound with $\delta=0.2$, which gives $1/3600$. The three values of $\overline M$ are rational interval bounds over the corresponding all-arm region.

\paragraph{Information model and ablations.}
Table~\ref{tab:upper-information-reference} records which rows use the same online information as the independent gradient-bandit update and which require joint actions, communication, or aggregation channels. To separate the product-policy mechanism from variance reduction, we also evaluate an action-independent moving reward baseline initialized at $b_1=0$. Round $t$ uses $R_t-b_t$, followed by $b_{t+1}=b_t+0.05(R_t-b_t)$. Because $b_t$ is fixed before the current actions are sampled, the baseline term has zero conditional mean and changes neither the product-gradient drift nor the deterministic trap geometry. This variant uses the same online information as the baseline-free update: the sampled local arm and the common reward. The stepsize and baseline ablation uses $\eta\in\{0.05,0.10,0.15,0.20,0.30\}$, $80$ runs, and $3000$ steps. The structural pseudo-regret theorem is stated for the baseline-free update because a moving baseline requires a self-bounding certificate for the residual second moment along the joint policy--baseline process.

\begin{table}[tb]
\centering
\small
\begin{tabular}{@{}p{0.24\linewidth}p{0.34\linewidth}p{0.32\linewidth}@{}}
\toprule
Reference & Online information & Interpretation \\
\midrule
Independent update & own arm + common reward & matched learner \\
Moving-baseline variant & own arm + common reward & matched ablation \\
Checker & offline reward table & offline certificate or diagnostic check \\
Centralized learner & joint actions or policy & upper-info reference \\
Comm. bandit & messages or rewards & different protocol \\
\bottomrule
\end{tabular}
\caption{Information-model reference. Centralized or communicating methods can serve as upper-information references, but they are not protocol-matched baselines for fully oblivious learners.}
\label{tab:upper-information-reference}
\end{table}

We also include a small upper-information reference for the block trap. Both rows use $4$ runs, $1500$ steps, and $\eta=0.20$. The centralized row uses a single softmax distribution over the nine joint actions and therefore observes and parameterizes the joint action, which is unavailable to oblivious learners. This row is not a fair protocol-matched baseline; it is included only to indicate the performance cost of the product-policy information restriction.

\begin{table}[tb]
\centering
\small
\begin{tabular}{@{}lccc@{}}
\toprule
Reference & Final gap & $\pi(\ba^\star)$ & \shortstack{Avg.\\pseudo-regret} \\
\midrule
\shortstack[l]{Independent\\default} & $0.1037$ & $4.77{\times}10^{-6}$ & $0.1274$ \\
\shortstack[l]{Centralized\\joint-action} & $0.0270$ & $0.7443$ & $0.0595$ \\
\bottomrule
\end{tabular}
\caption{Upper-information reference on the block trap. The centralized learner has access to joint-action information and is not protocol matched to the independent update.}
\label{tab:upper-information-results}
\end{table}

\paragraph{Instances.}
The sparse coordination instance uses the reward class and invariant support geometry of Main-Text Corollary 1. The optimal joint action is $\ba^\star=(0,\ldots,0)$, each player has two arms, and the deterministic reward is
\begin{equation}
  r(\ba)=\one\{\ba=\ba^\star\}.
\end{equation}
We vary the number of players over $N\in\{3,5,8\}$.
The practical choice $\eta=0.20$ exceeds the corollary's conservative sufficient limits $1/96$, $1/640$, and $1/8192$, respectively. These trajectories are therefore mechanism diagnostics and do not empirically verify the certified rate constants.

The dense additive instance uses $N=6$ players and $K_i=4$ arms per player. Arm $0$ is optimal for every player, and the deterministic mean reward is
\begin{equation}
  r(\ba)=\frac{1}{N}\sum_{i=1}^N \one\{A_i=0\}.
\end{equation}
The observed reward follows a Bernoulli distribution with this mean.

The AI-pipeline routing instance is a stylized dense \CRMPMAB{} with $N=4$ modules: prompt template, retrieval policy, tool choice, and verification mode. Each module chooses one of three local configurations and observes only the final task-success reward. Arm $0$ is the preferred local configuration. The mean reward is
\begin{equation}
\begin{aligned}
r(\ba)
&=\frac{1}{4}\sum_{i=1}^4 u(A_i)
-0.015\sum_{(i,j)\in E}\one\{A_i\ne A_j\},\\
u&=(1,0.4,0.1),
\end{aligned}
\end{equation}
where $E=\{(1,2),(2,3),(3,4),(1,4)\}$ encodes mild compatibility penalties between adjacent pipeline choices. Although this table is dense and nonadditive, changing any one module from a nonzero arm to arm $0$ improves the reward by at least $0.15-2(0.015)=0.12$. We use the conservative certificate value $\delta=0.11$.

The coordinate block trap is the two-player, three-arm reward matrix used in Main-Text Proposition 2:
\begin{equation}
\begin{array}{c|ccc}
 & A_2=0 & A_2=1 & A_2=2 \\ \hline
A_1=0 & 0.9 & 0.9 & 0\\
A_1=1 & 0.9 & 0.9 & 0\\
A_1=2 & 0 & 0 & 1
\end{array}.
\end{equation}
For the opt-biased trajectory, $H_{i,1}(2)=2$ and the other preferences are zero. For the phase diagram, we scan the two gaps $H_{i,1}(2)-H_{i,1}(0)=H_{i,1}(2)-H_{i,1}(1)$, $i=1,2$, over a $31\times31$ grid on $[-4,4]^2$, using $20$ runs per grid point, $2000$ steps, and $\eta=0.20$.

For the random finite-table stress test, the rewards $r(\ba)$ are sampled independently from the uniform distribution on $[0,1]$, the best joint action is selected as $\ba^\star$, and the ratio diagnostics are evaluated with $c=0.05$ by combining a coarse grid with random product policies. The construction enforces only $\pi_i(A_i^\star)\ge c$, so every sampled policy lies in $\Pi_\star(c)$; it does not enforce the all-arm constraints of $\Pi_{\rm all}(c)$. Consequently, $\widehat M_{\rm rs}(c)$ is a diagnostic ratio over the sampled policies, not an estimate of the formal certificate $M_{\rm all}(c)$.

The plotted $N=3,K=2$ test uses $64$ tables, $16$ runs, $3000$ learning steps, $\eta=0.15$, $8000$ diagnostic policy samples per table, and an $11$-point grid for each optimal-arm probability. The multi-size suite uses $\eta=0.15$, $2000$ learning steps, $1500$ diagnostic policy samples, a $9$-point grid per optimal-arm probability, and $24$ random starts per table. Each initial preference is sampled uniformly from $[-1,1]$, and rewards during learning are Bernoulli samples with the table entry as their mean.

To assess trap robustness, we vary the suboptimal coordinated reward in the family
\begin{equation}
\begin{array}{c|ccc}
 & A_2=0 & A_2=1 & A_2=2 \\ \hline
A_1=0 & \rho & \rho & 0\\
A_1=1 & \rho & \rho & 0\\
A_1=2 & 0 & 0 & 1
\end{array}
\end{equation}
with $\rho\in\{0.60,0.75,0.90,0.95\}$. For each $\rho$, we scan a $21\times21$ grid on $[-4,4]^2$, use $4$ runs per grid point, and run $1500$ steps with $\eta=0.20$.

\FloatBarrier
\subsection{Results}

\begin{figure*}[t]
\centering
\includegraphics[width=0.52\textwidth]{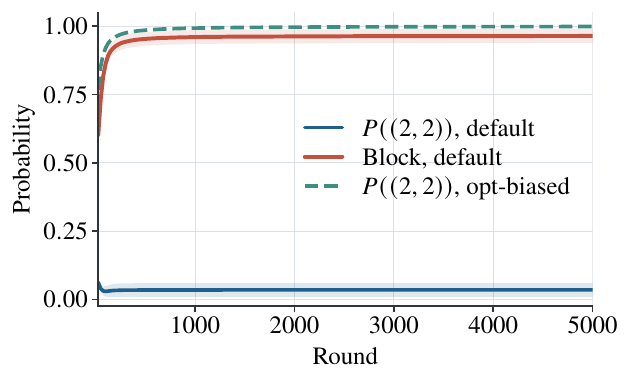}\hfill
\includegraphics[width=0.42\textwidth]{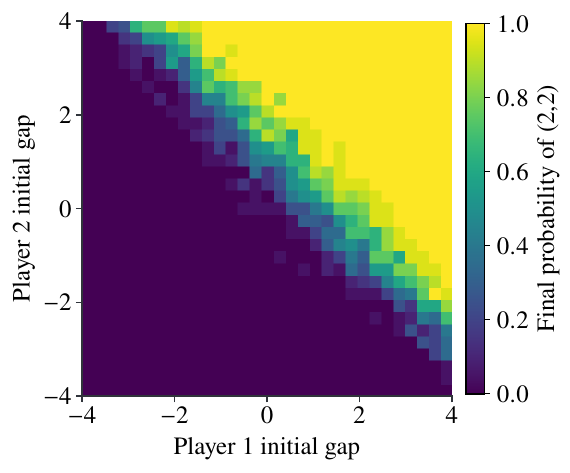}
\caption{Coordinate block trap. Left: trajectories from default and opt-biased initializations. Right: final optimal-profile probability over the initial preference gaps.}
\label{fig:trap-summary}
\end{figure*}

\paragraph{Protocol-matched baseline and stepsize ablations.}
Table~\ref{tab:stepsize-baseline-ablation} reports the best final product-policy gap over the stepsize grid for each variant. The moving baseline leaves the sparse gap unchanged at $0.0011$ and reduces the dense additive gap from $0.0133$ to $0.0054$, consistent with effective variance reduction in a benign landscape. On the block trap, however, the gap changes from $0.0922$ to $0.0963$ and remains bounded away from zero. Thus, the baseline can improve finite-sample optimization without changing the mean product-policy geometry that creates the trap.

\begin{table}[tb]
\centering
\small
\begin{tabular}{@{}lcccc@{}}
\toprule
Instance & \shortstack{No baseline\\$\eta$} & Gap & \shortstack{Moving baseline\\$\eta$} & Gap \\
\midrule
Sparse & $0.30$ & $0.0011$ & $0.30$ & $0.0011$ \\
Additive & $0.20$ & $0.0133$ & $0.30$ & $0.0054$ \\
Block trap & $0.30$ & $0.0922$ & $0.30$ & $0.0963$ \\
\bottomrule
\end{tabular}
\caption{Protocol-matched stepsize and baseline ablation. Both variants use only the sampled local arm and the common reward.}
\label{tab:stepsize-baseline-ablation}
\end{table}

\paragraph{Random finite-table stress test.}
We denote the random-search diagnostics by $\widehat{\lambda}_{\rm rs}(c)$ and $\widehat{M}_{\rm rs}(c)$. The latter is not an estimate of $M_{\rm all}(c)$ because the diagnostic search does not enforce $\Pi_{\rm all}(c)$. Table~\ref{tab:random-suite} reports the multi-size stress suite. Across all three settings, the highest quartile of $\widehat{\lambda}_{\rm rs}(c)$ has a lower mean terminal gap than the lowest quartile. This experiment is an offline stress test of the certificate heuristic rather than a comparison with algorithms that use stronger information.

\noindent\parbox{\linewidth}{%
\centering
\small
\begin{tabular}{@{}lcccc@{}}
\toprule
Setting & Tables & All gap & Low-$\widehat{\lambda}$ gap & High-$\widehat{\lambda}$ gap \\
\midrule
$N=3,K=2$ & $128$ & $0.0413$ & $0.0561$ & $0.0260$ \\
$N=4,K=2$ & $64$ & $0.0561$ & $0.0584$ & $0.0485$ \\
$N=3,K=3$ & $64$ & $0.0701$ & $0.0903$ & $0.0573$ \\
\bottomrule
\end{tabular}
\captionof{table}{Random finite-table stress suite; low and high denote the lowest and highest $\widehat{\lambda}_{\rm rs}(c)$ quartiles.}
\label{tab:random-suite}
}\par

\noindent\parbox{\linewidth}{%
\centering
\includegraphics[width=0.95\linewidth]{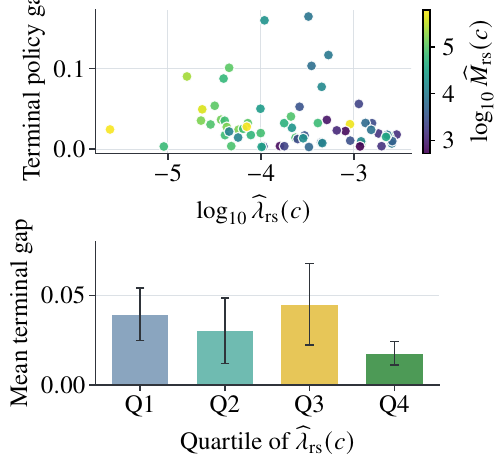}
\captionof{figure}{Random-table stress: terminal gap versus $\widehat{\lambda}_{\rm rs}(c)$ (top; color indicates $\widehat{M}_{\rm rs}(c)$) and quartile means with $\pm1.96$ standard errors (bottom).}
\label{fig:random-table-stress}
}\par

\paragraph{Optimally supported coordination.}
Figure~\ref{fig:opt-supported-summary} shows the product-policy gap and average pseudo-regret. In this instance, the probability assigned to the optimal joint action is exactly one minus the gap. The rewards and invariant support match the structural class of Main-Text Corollary 1, but the practical stepsize exceeds its sufficient range. The plots therefore illustrate the predicted geometry rather than certify the finite-time constants.

\noindent\parbox{\linewidth}{%
\centering
\includegraphics[width=0.95\linewidth]{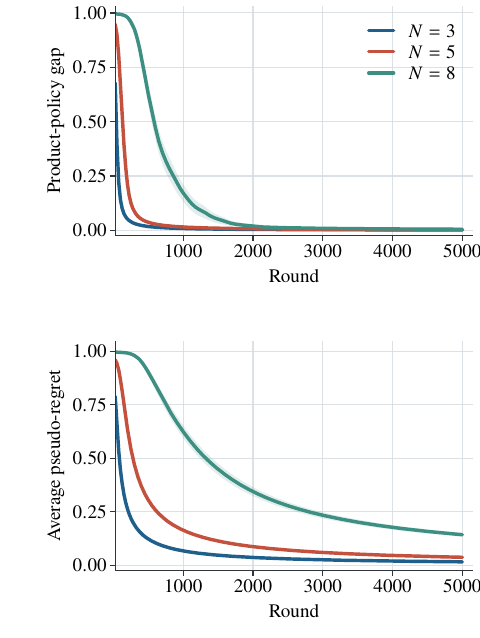}
\captionof{figure}{Optimally supported coordination: product-policy gap (top) and average pseudo-regret (bottom).}
\label{fig:opt-supported-summary}
}\par

\paragraph{Benign additive common reward.}
This dense instance has a benign no-trap geometry: every player has a local move that improves the common reward independently of the other players. The top panel of Figure~\ref{fig:dense-instances} shows the product-policy gap, average pseudo-regret, and probability assigned to the optimal joint action. The experiment illustrates that the structural perspective also applies to dense reward tables beyond the explicit sparse class.

\paragraph{Stylized AI-pipeline routing.}
The AI-pipeline instance is not intended to benchmark a deployed system; instead, it is a compact finite-table case that matches the motivating information pattern. Four pipeline modules independently adapt their local choices using one shared task-success score, and no module observes the choices of the others. The bottom panel of Figure~\ref{fig:dense-instances} shows that the independent update improves both the product-policy gap and the probability assigned to the best pipeline configuration. The certificate for this instance is analytic rather than diagnostic: although the instance is dense and nonadditive, local dominance gives $\lambda_\star(c)\ge(c\delta/4)^2$ and the all-arm noise bound from Main-Text Proposition 4.

\noindent\parbox{\linewidth}{%
\centering
\includegraphics[width=0.95\linewidth]{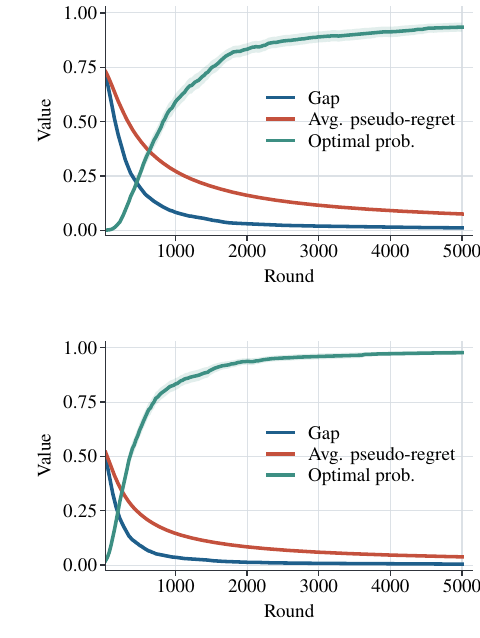}
\captionof{figure}{Dense structured instances: additive reward (top) and AI-pipeline routing (bottom).}
\label{fig:dense-instances}
}\par

\paragraph{Coordinate block trap.}
The globally optimal action is $(2,2)$, but the block $\{0,1\}^2$ is coordinate-wise attractive in the deterministic product-gradient landscape. The left panel of Figure~\ref{fig:trap-summary} compares the default uniform initialization with an opt-biased initialization. Under the default initialization, the finite-horizon stochastic runs remain concentrated on the suboptimal block and assign low probability to $(2,2)$, consistent with the deterministic basin analysis. Supplementary Proposition~\ref{prop:stochastic-trap} complements these constant-stepsize diagnostics by establishing constant-probability trapping for a diminishing-stepsize process; the right panel instead estimates the deterministic basin of attraction. Table~\ref{tab:trap-robustness} shows that this basin grows as the suboptimal coordinated reward approaches the global optimum, supporting the interpretation that the failure mode is an effect of the product-policy landscape rather than a numerical artifact of a single reward table.

\noindent\parbox{\linewidth}{%
\centering
\small
\begin{tabular}{@{}lccc@{}}
\toprule
$\rho$ & Trap basin & Mean $\pi(\ba^\star)$ & Mean objective \\
\midrule
$0.60$ & $0.575$ & $0.423$ & $0.766$ \\
$0.75$ & $0.603$ & $0.396$ & $0.846$ \\
$0.90$ & $0.634$ & $0.364$ & $0.933$ \\
$0.95$ & $0.644$ & $0.354$ & $0.964$ \\
\bottomrule
\end{tabular}
\captionof{table}{Trap robustness versus $\rho$; ``trapped'' means final $\pi(\ba^\star)<0.5$.}
\label{tab:trap-robustness}
}\par

\bibliography{aaai2027}